\title{Fairness-Aware Learning with Restriction of Universal Dependency using $f$-Divergences}
\author{
  Kazuto Fukuchi\\
  \texttt{kazuto@mdl.cs.tsukuba.ac.jp}\\
  Graduate School of SIE, University of Tsukuba\\
  \and
  Jun Sakuma\\
  \texttt{jun@cs.tsukuba.ac.jp}\\
  Graduate School of SIE, University of Tsukuba / JST CREST
}
\def\@maketitle{%
  \newpage
  \null
  \vskip 2em%
  \begin{center}%
  \let \footnote \thanks
    {\LARGE \@title \par}%
    \vskip 1.5em%
    {\large
      \begin{tabular}[t]{c}%
        \@author
      \end{tabular}\par}%
  \end{center}%
  \par
  \vskip 1.5em}
\begin{document}

\maketitle

\begin{abstract}
\noindent
{\em Fairness-aware learning} is a novel framework for classification tasks. Like regular empirical risk minimization~(ERM), it aims to learn a classifier with a low error rate, and at the same time, for the predictions of the classifier to be independent of sensitive features, such as gender, religion, race, and ethnicity.
Existing methods can achieve low dependencies on given samples, but this is not guaranteed on unseen samples.
The existing fairness-aware learning algorithms employ different dependency measures, and each algorithm is specifically designed for a particular one. Such diversity makes it difficult to theoretically analyze and compare them. In this paper, we propose a general framework for fairness-aware learning that uses $f$-divergences and that covers most of the dependency measures employed in the existing methods. We introduce a way to estimate the $f$-divergences that allows us to give a unified analysis for the upper bound of the estimation error; this bound is tighter than that of the existing convergence rate analysis of the divergence estimation.
With our divergence estimate, we propose a fairness-aware learning algorithm, and perform a theoretical analysis of its generalization error.
Our analysis reveals that, under mild assumptions and even with enforcement of fairness, the generalization error of our method is $O(\sqrt{1/n})$, which is the same as that of the regular ERM. In addition, and more importantly, we show that, for any $f$-divergence, the upper bound of the estimation error of the divergence is $O(\sqrt{1/n})$. This indicates that our fairness-aware learning algorithm guarantees low dependencies on unseen samples for any dependency measure represented by an $f$-divergence.
\end{abstract}

\section{Introduction}
Recently developed information systems are being increasingly incorporating machine learning techniques for making important decisions, such as credit scoring, calculating insurance rates, and evaluating employment applications.
These decisions can result in the unfair treatment, if the decisions depend on the sensitive information, such as the individual's gender, religion, race, or ethnicity.
Fairness-aware learning attempts to solve this problem, and has recently received a great deal of attention~\citep{pedreshi2008discrimination,Calders:2010:TNB:1842547.1842562,Sweeney:2013:DOA:2460276.2460278}.
In this paper, we consider the use of fairness-aware learning for classification problems.

Let $\dom{X}$ and $\dom{Y} = \cbrace{1,...,c}$ be the domain of the input and the domain of the target, respectively.
In ordinary classification algorithms, the learner aims to find a hypothesis $f: \dom{X}\to\dom{Y}$ that minimizes misclassifications from a given set of iid samples $\set{S}_n = \cbrace{ (x_i, y_i) }_{i=1}^n \in (\dom{Z} = \domprod{XY})^n$. In fairness-aware learning, we assume that the input $x_i$ contains a {\em viewpoint} $v_i \in \dom{V}$, which represents the sensitive information of individuals. The learner aims to find an $f$ that will have a low misclassification rate and for which the output of $f$ has little dependency on the viewpoint $v$.
For example, suppose the company want to make a hiring decision using information collected from job applicants~(input $x$), including their age, place of residence, and work experience, but also including their gender, religion, race, and ethnicity~(viewpoint $v$).
We wish to make hiring decisions based on the potential work performance of the job applicants~(target $y$) via a supervised learning algorithm, $y = f(x)$.
We say $f$ is discriminatory if the output of $f$ is dependent on the viewpoint $v$~\citep{pedreschi2009measuring}. Fairness-aware learning attempts to avoid such unfair decisions by minimizing the dependency of the output of $f$ on the viewpoint $v$~\citep{Calders:2010:TNB:1842547.1842562,kamiran2010discrimination,kamishima.ecml.pkdd2012.fairness-aware,DBLP:conf/icml/ZemelWSPD13,DBLP:conf/pkdd/FukuchiS14}.
Needless to say, minimization of the misclassification rate and minimization of the dependency are conflicting targets.
Therefore, we need to consider the trade-off between  misclassification and dependency.

The existing methods resolve this conflict by suppressing the dependency to sensitive viewpoints; this is accomplished by introducing a regularization term~\citep{kamishima.ecml.pkdd2012.fairness-aware,DBLP:conf/icml/ZemelWSPD13,DBLP:conf/pkdd/FukuchiS14} or by adding constraints~\citep{Calders:2010:TNB:1842547.1842562,kamiran2010discrimination,zliobaite2011handling}, to the objective function of the regular empirical risk minimization~(See \cref{sec:related-works}). Typically, such techniques can lead to predictions for which there is less dependency on the sensitive viewpoints of the given samples~(empirical dependency). However, predictors with low empirical dependency do not necessarily achieve low dependency on sensitive viewpoints of unseen samples~(generalization dependency). In the hiring decision example, the hypothesis $f$ is trained with information collected from the past histories of job applicants. Predictors trained with existing methods might make fair decisions for the job applicants in the past~(low empirical dependency). However, fair decisions for the job applicants in the future~(low generalization dependency) are not guaranteed. Except for the method of \citet{DBLP:conf/pkdd/FukuchiS14}, most of the existing methods have no theoretical guarantee of the generalization dependency. In \citep{DBLP:conf/pkdd/FukuchiS14}, theoretical analysis provides a probabilistic bound on the generalization dependency, but the analysis is derived for only a specific measure of dependency.

\noindent{\bfseries Our contributions.}

We perform a unified analysis of the fairness-aware learning with more general dependency measures based on the $f$-divergence~\citep{ali1966general,csiszar63}. The $f$-divergence is a universal class of the divergences, which can represent most of existing divergences, including the total variational distance, the covariance, the Hellinger distance, the $\chi^2$-divergence, and the KL-divergence. Our fairness-aware learning basically follows the framework of empirical risk minimization~(ERM).
The goal of fairness-aware learning is to obtain predictors with an upper bound guarantee of generalization dependency; however, it cannot be directly evaluated because the underlying distribution is not observable. We thus derive an upper bound of the generalization dependency by the empirical dependency plus two extra terms. Our framework achieves the fairness of the resultant predictors by restricting the class of hypotheses to those with low empirical dependency. Thus, the upper bound of the generalization dependency of the predictors can be theoretically derived by using the bound.

The contributions of this study are two-fold.
First, we propose a novel generalized procedure for estimating the $f$-divergences for fairness-aware learning. Our estimation method can be regarded as a generalization of \citep{nguyen2010estimating,ruderman2012tighter,kanamori2009least}. As already stated, we constrain the hypothesis class by the $f$-divergence for guarantee of fairness. It is thus important to derive a tighter upper bound of the $f$-divergence to achieve lower generalization dependency. Existing divergence estimation methods~\cite{nguyen2010estimating} provides an upper bound of the $f$-divergence; however, the bound is not suitable for our purpose for the following two reasons. First, their analysis is specifically derived for KL-divergence and cannot be expanded to the general $f$-divergences. Second, their bound is derived for convergence analysis, not for the upper bound of the divergence. Thus, the bound is loose for our purpose.
Our generalized estimation procedure provides a tighter upper bound of the $f$-divergences by introducing the maximum mean discrepancy. 
As a result, the estimation error of the $f$-divergence is bounded above by the empirical maximum mean discrepancy and by $O(\sqrt{1/n})$.

Second, we formulate a general ERM framework for fairness-aware learning with employing the $f$-divergence. 
We analyze the generalization error and generalization dependency of the proposed fairness-aware learning algorithm, and we show that even when fairness is enforced, the generalization error can be bounded above by the Rademacher complexity and $O(\sqrt{1/n})$, as in the regular ERM.
The generalization dependency can be bounded above by the empirical maximum mean discrepancy term and two other extra terms. Thanks to the theoretical analysis of generalization dependency, we can theoretically compare the upper bound of the estimation error by dependency measures. Our analysis revealed that the divergence estimation errors for all of these divergences are $O(\sqrt{1/n})$ equally, and the Hellinger distance achieves the lowest estimation error in terms of the constant term of the probabilistic error. We also derived a convex formulation of fairness-aware learning that works with any dependency measures represented by the $f$-divergence. The optimization problem can be readily solved by a standard convex optimization solver.

\section{Related Works}
\label{sec:related-works}
Within the setup described in the Introduction, \citet{Calders:2010:TNB:1842547.1842562} pointed out that elimination of the viewpoint from the given samples is insufficient for achieving low correlation between the output of $f$ and the viewpoint; this is because the viewpoint has an indirect influence since it is not independent from the input. For example, when we make a hiring decision using information collected from job applicants via supervised learning, even if we train with samples that exclude race and ethnicity, the output of the resultant hypothesis $f$ may be indirectly correlated with race or ethnicity, because the addresses of the applicants may be correlated with their race or ethnicity. Such an indirect effect is called the red-lining effect~\citep{Calders:2010:TNB:1842547.1842562}.

To remove the red-lining effect, existing works have attempted to construct a classifier that results in a fairer hypothesis. \citet{Calders:2010:TNB:1842547.1842562} proposed the naive Bayes classifier with fairness constraint, which employs the difference between the conditional probabilities $\abs*{\p(f(\set{X}) = y_+ | v_+) - \p(f(\set{X}) = y_+ | v_-)}$ where $\dom{Y} = \cbrace{y_+, y_-}$ and $\dom{V} = \cbrace{v_+, v_-}$.
\citet{kamiran2010discrimination} and \citet{zliobaite2011handling} discussed various situations in which discrimination can occur, in terms of the difference of conditional probabilities. \citet{dwork2012fairness} introduced a fairness-aware learning framework of the ERM with constraints of {\em statistical parity}, defined as the total variational distance between $\p(f(\set{X})|v_+)$ and $\p(f(\set{X})|v_-)$. \citet{DBLP:conf/icml/ZemelWSPD13} presented an algorithm to preserve fairness in a classification setting based on statistical parity.
\citet{kamishima.ecml.pkdd2012.fairness-aware} proposed a fairness-aware learning algorithm of the maximum likelihood estimation by penalizing the log-likelihood using KL-divergence between $\p(f(\set{X}),\set{V})$ and $\p(f(\set{X}))\p(\set{V})$\footnote{The KL divergence between $\p(f(\set{X}),\set{V})$ and $\p(f(\set{X}))\p(\set{V})$ is known as mutual information between $f(\set{X})$ and $\set{V}$.}.
These fairness-aware learning algorithms do not have a theoretical guarantee for the estimation error of the dependency measures. In addition, the design of these algorithms are tightly coupled with specific dependency measures. They thus have less flexibility for choosing other dependency measures.

The fairness for unseen samples can be measured by the estimation error bound of the dependency measure.
\citet{DBLP:conf/pkdd/FukuchiS14} first derived a bound on the estimation error of a specific measure, namely the {\em +1/-1 neutrality risk}. They proved that the estimation error of the measure is bounded above in probability by the Rademacher complexity of the hypothesis class $\fset{F}$ and $O(\sqrt{1/n})$ term. Unfortunately, the analysis relies on the +1/-1 neutrality risk and cannot be generalized to other types of dependency measures.

Estimation procedures that use $f$-divergences that are based on iid samples have been studied extensively. For example, for the KL-divergence, method have been proposed that use nearest-neighbor distances~\citep{wang2009divergence} and least-squares estimations of the probability ratio~\citep{kanamori2009least}. To estimate $f$-divergences, \citet{DBLP:conf/icml/Garcia-GarciaLS11} introduced an estimation procedure that uses loss minimization and sampling. \citet{kanamori2012divergence} presented a divergence estimator of the $f$-divergences based on using the moment matching estimator~\citep{qin1998inferences} to estimate the probability ratio. \citet{nguyen2010estimating} used a property of convex conjugate functions to derive the M-estimator of $f$-divergences, and they also derived its convergence rate.
In our analysis, we derive the upper bound of the estimation error, which yields a tighter upper bound of the estimation error of dependency measures compared to the existing convergence rate analysis.

\section{Problem Formulation}
Let $\dom{X}$ and $\dom{Y} = \cbrace{1,...,c}$ be the domain of the input and the domain of the target, respectively. We assume that the learner obtains a set of iid samples $\set{S}_n = \cbrace{ (x_i, y_i) }_{i=1}^n \in (\dom{Z} = \domprod{XY})^n$ that are drawn from an unknown probability measure $\mu$, which is defined on some measurable space $(\dom{Z}, \family{Z})$. In addition, we assume the input $x_i$ consists of the viewpoint $v_i \in \dom{V}$ and various other features $w_i \in \dom{W}$. Thus, $\dom{X} = \dom{V}\times\dom{W}$.
Given iid samples, the learner seeks to find a hypothesis $f: \dom{X}\to\dom{Y}$ from a class of measurable functions $\fset{F}$ that minimizes both the misclassification rate and the dependence on the viewpoint $v$. We denote $(X_i, Y_i)$ for $i = 1,...,n$ as the random variables of the samples, and we denote $V_i$ as the random variable for the corresponding viewpoint.

The misclassification of the hypothesis $f$ is evaluated by the generalization risk, which is defined as $R(f) = \Mean[\ell(Y, f(X))]$.
The goal of the learner is to find the hypothesis $f^* \in \fset{F}$ such that
\begin{align}
   R(f^*) = \inf_{f \in \fset{F}} R(f).
\end{align}
The generalization risk $R(f)$ cannot be evaluated directly because the sample distribution $\mu$ is unknown. Instead of the generalization risk, empirical risk minimization~(ERM) finds a hypothesis $f_n \in \fset{F}$ that minimizes the empirical risk
\begin{align}
  R_n(f) = \frac{1}{n}\sum_{i=1}^n \ell(Y_i, f(X_i)).
\end{align}
Minimization of the empirical risk results in a relatively low generalization risk, and the generalization risk of the resultant hypothesis converges towards that of the optimal hypothesis $f^*$ as the number of samples increases; this has been shown theoretically~\citep{bartlett2005local}.

\subsection{A Generalized Class for Dependency Measures}

For the evaluation of the dependency of the output of $f$ on the viewpoint $v$, we define a general class of measures for dependency.
Given that $\pidep{V,f(X)} = \pjoin{V,f(X)}$ if $f(X)$ and $V$ are statistically independent, we can evaluate this dependency by evaluating the difference between the two probability measures, $\pidep{V,f(X)}$ and $\pjoin{V,f(X)}$. To measure the difference between two probability measures, we use the $f$-divergences.
Suppose $\pmet{P}$ and $\pmet{Q}$ are two probability measures on a compact domain $\dom{X}$, where $\pmet{P}$ is absolutely continuous with respect to $\pmet{Q}$. The class of $f$-divergences, also known as the Ali--Silvey distances~\citep{ali1966general,csiszar63}, takes the form
\begin{align}
 D_\phi(\pmet{P}, \pmet{Q}) =& \int \phi\paren*{\frac{d\pmet{P}}{d\pmet{Q}}} d\pmet{Q},
\end{align}
where $\phi: \RealSet^+\to\RealSet$ is a convex and lower semicontinuous function such that $\phi(1) = 0$.\footnote{The $f$-divergence becomes one of the existing divergences due to the choice of $\phi$; that is, it becomes the total variational distance if $\phi(u) = \abs{u-1}$, the Hellinger distance if $\phi(u) = (\sqrt{u} - 1)^2$, the $\chi^2$-divergence if $\phi(u) = (u-1)^2/u$, or the KL-divergence if $\phi(u) = (u-1)-\ln(u)$. See \cref{fig:phi}.}
After we define the $f$-divergences, we define the measure of the dependency between $f(X)$ and $V$ as follows:
\begin{align}
  D_\phi(f) =& D_\phi(\pidep{V,f(X)}, \pjoin{V,f(X)}).
\end{align}
Without loss of generality, we will assume that the subdifferential of $\phi$ at $1$ contains $0$. This can be readily confirmed by $\phi_c(u) = \phi(u) - c(u-1)$ which does not change the value of the $f$-divergences $D_{\phi_c}(\pmet{P},\pmet{Q}) = D_\phi(\pmet{P},\pmet{Q})$ for any finite $c$. We will focus on the convex functions $\phi$ that are differentiable on $\RealSet^+$ except at $1$. Note that this includes most of the divergences, including the total variational distance, the Hellinger distance, the $\chi^2$-divergence, and the KL-divergence.

\subsection{Fairness-Aware Learning with Generalized Dependency Measures}

In fairness-aware learning, the learner attempts to minimize both $R(f)$ and $D_\phi(f)$.
However, since $\argmin_{f \in \fset{F}}R(f) = \argmin_{f\in\fset{F}}D_\phi(f)$ does not always hold, there exists a trade-off between $R(f)$ and $D_\phi(f)$.
We thus consider a subset of $\fset{F}$ parameterized by $\eta \ge 0$ defined as follows:
\begin{align}
  \fset{F}_\eta = \cbrace*{f \in \fset{F} \middle| D_\phi(f) \le \eta }.
\end{align}
Thus, the goal of the fairness-aware learning is to achieve the hypothesis $f_\eta^* \in \fset{F}_\eta$ that satisfies
\begin{align}
  R(f_\eta^*) = \inf_{f \in \fset{F}_\eta} R(f).
\end{align}
Again, since the generalization risk cannot be evaluated directly, the learner minimizes the empirical risk as
\begin{align}
  \min_{f \in \fset{F}_\eta} R_n(f). \label{eq:erm-div-rest}
\end{align}
The objective of fairness-aware learning is to solve the optimization problem of \cref{eq:erm-div-rest}. Unfortunately, $D_\phi(f)$ cannot be evaluated directly again since the underlying distribution is unobservable. In \cref{sec:div-est}, we introduce a novel estimation procedure of the $f$-divergences to alleviate evaluation of $D_\phi(f)$. Then, we prove an upper bound of $D_\phi(f)$ with empirical estimation of $D_\phi(f)$ given a finite number of samples. In \cref{sec:fal-div-est}, the objective function of fairness-aware learning is redefined using the empirical estimation of $D_\phi(f)$.

\section{Divergence Estimation}
\label{sec:div-est}

In this section, we introduce a procedure that involves minimizing the {\em maximum mean discrepancy}~(MMD) for estimating $D_\phi(f)$, and we determine a non-asymptotic bound on the estimation error. This procedure covers the existing $f$-divergences or KL-divergence estimation algorithms proposed by \citet{nguyen2010estimating}, \citet{ruderman2012tighter}, and \citet{kanamori2012divergence}.

\subsection{Estimation of the Divergence by Minimizing the Maximum Mean Discrepancy}

To estimate $D_\phi(f)$, we first empirically estimate the probability ratio $r(V,f(X)) = d\pidep{V,f(X)}/d\pjoin{V,f(X)}$, and then we empirically evaluate $D_\phi(f)$ by using the estimated probability ratio.
Since $d\pidep{V,f(X)} = r(V,f(X))d\pjoin{V,f(X)}$ holds for the probability ratio $r$, the minimizer of the difference between $d\pidep{V,f(X)}$ and $r(V,f(X))d\pjoin{V,f(X)}$ is expected to be close to the probability ratio. As a measure of the disparity of $d\pidep{V,f(X)}$ and $r(V,f(X))d\pjoin{V,f(X)}$, we use the maximum mean discrepancy. Let $\fset{G}$ be a set of functions $g:\domprod{VY}\to\RealSet$. Let $X'$ be an independent copy of $X$, and let $V'$ be the viewpoint of $X'$. Then, the MMD with $\fset{G}$ between $d\pidep{V,f(X)}$ and $r(V,f(X))d\pjoin{V,f(X)}$ is defined as
\begin{align}
 D_{{\rm MMD}, f}(r)\!=&\!\sup_{g \in \fset{G}} \bracket*{\int g(V, f(X)) d(\pidep{V,f(X)})\!-\!\int g(V, f(X))r(V,f(X)) d\pjoin{V,f(X)} }\\
  \!=&\!\sup_{g \in \fset{G}} \bracket*{\Mean[g(V', f(X))]\!-\!\Mean[r(V,f(X))g(V, f(X))]}. \label{eq:mmd}
\end{align}
If $r$ is equivalent to the probability ratio, we have $D_{{\rm MMD}, f}(r)=0$.
However, $D_{{\rm MMD}, f}(r)=0 \implies d\pidep{V,f(X)} = r(V,f(X))d\pjoin{V,f(X)}$ does always not satisfy, which requires that $\fset{G}$ is a set of functions on a universal kernel~\citep{DBLP:journals/jmlr/Steinwart01}. Therefore, the evaluation ability of the discrepancy of MMD is dependent on the choice of $\fset{G}$.
The U-statistics~\citep{hoeffding:1948} gives an unbiased estimator of \cref{eq:mmd} as
\begin{align}
 D_{{\rm MMD}, f, n}(r) = \sup_{g \in \fset{G}} \bracket*{ \frac{1}{n(n-1)}\sum_{1 \le i \ne j \le n}\bracket*{ g(V_i, f(X_j)) - r(V_i, f(X_i))g(V_i, f(X_i)) } }. \label{eq:mmd-emp}
\end{align}
The estimator of the probability ratio $r_n$ is obtained by minimizing $D_{{\rm MMD}, f, n}(r)$.\footnote{The efficient computation of the empirical MMD is shown in \cref{sec:mmd-rkhs}.}
We can add the regularizer term $\Omega(r)$ to the empirical MMD to ensure the consistency of the estimator:
\begin{align}
 \min_{r_n \ge 0} ~ D_{{\rm MMD}, f, n}(r_n) + \Omega(r_n). \label{eq:est-ratio}
\end{align}
After obtained $r_n$ by solving \cref{eq:est-ratio}, the $f$-divergence is empirically evaluated as
\begin{align}
  D_{\phi,n}(f) = \frac{1}{n}\sum_{i = 1}^n \phi\paren*{r_n(V_i, f(X_i))}.
\end{align}
The estimation procedure is equivalent to \citep{nguyen2010estimating} if $\Omega(r) = \lambda_n D_{\phi,n}(f)$ where $\lambda_n$ is regularizer parameter. In addition to the regularizer term, if we add the constraint $\tfrac{1}{n}\sum_{i=1}^n r_n(V_i, f(X_i)) = 1$ into \cref{eq:est-ratio}, the estimation procedure becomes same as \citep{ruderman2012tighter}. Letting $\Omega(r) = 0$ and the appropriate choice of $\fset{G}$ yields the estimation procedure of \citep{kanamori2012divergence}.

\subsection{Analysis of Estimation Error}
In this subsection, we show the upper bound on the estimation error
\begin{align}
 D_\phi(f) - D_{\phi,n}(f).
\end{align}
Surprisingly, the upper bound of the estimation error does not depend on the complexity of the class of functions $\fset{G}$.
In what follows, we use $D_\phi^r(f) = \Mean[\phi(r(V, f(X)))]$ and $D_{\phi,n}^r(f) = \sum_{i=1}^n \phi(r(V_i, f(X_i)))/n$. In addition, we denotes the true probability ratio as $r^*(V,f(X))$.

The following theorem states the probabilistic upper bound on the estimation error.
\begin{theorem}\label{thm:div-bound}
  Let $r_n$ be the probability ratio estimated from the obtained set of samples $\set{S}_n$. Suppose that the class of the functions $\fset{G}$ of the MMD contains $\partial\phi(r^*(V, f(X))) / a$, where $\partial\phi$ is an element of the subdifferential of $\phi$, $a > 0$ is some constant, and $c_\ell \le r^*(V, f(X)) \le c_u$ almost surely, where $c_\ell \in (0, 1]$ and $c_u \in [1, \infty)$. Then, with probability at least $1 - e^{-t}$
  \begin{align}
    D_\phi(f) \le D_{\phi,n}(f) + aD_{{\rm MMD}, f, n}(r_n) + c\sqrt{\frac{2t}{n}},
  \end{align}
  where $c = 2\max\cbrace{\phi(c_\ell), \phi(c_u)} + \partial\phi(c_u)c_u + \partial\phi(c_u) - 2\partial\phi(c_\ell)$.
\end{theorem}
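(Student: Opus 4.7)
The plan is to decompose $D_\phi(f) - D_{\phi,n}(f)$ into a deterministic algebraic piece dominated by $aD_{{\rm MMD},f,n}(r_n)$ plus a mean-zero empirical-process piece which concentrates at rate $\sqrt{t/n}$ via McDiarmid's inequality. Two tools drive the split: (i) the first-order convex inequality $\phi(r_n) \ge \phi(r^*) + \partial\phi(r^*)(r_n-r^*)$ evaluated at each sample point, and (ii) the assumption $\partial\phi(r^*)/a \in \fset{G}$, which licenses plugging $g = \partial\phi(r^*)/a$ into the supremum defining $D_{{\rm MMD},f,n}(r_n)$ to extract a lower bound.

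Concretely, first add and subtract $D_{\phi,n}^{r^*}(f) = \tfrac{1}{n}\sum_i \phi(r^*(V_i,f(X_i)))$ to rewrite $D_\phi(f) - D_{\phi,n}(f)$ as $(D_\phi^{r^*}(f) - D_{\phi,n}^{r^*}(f)) + \tfrac{1}{n}\sum_i[\phi(r^*(V_i,f(X_i))) - \phi(r_n(V_i,f(X_i)))]$, and bound each summand of the second term by $\partial\phi(r^*(V_i,f(X_i)))(r^*(V_i,f(X_i))-r_n(V_i,f(X_i)))$ via (i). The key observation for (ii) is that $\sum_{i\ne j}r_n(V_i,f(X_i))g(V_i,f(X_i))$ collapses to $(n-1)\sum_i r_n(V_i,f(X_i))g(V_i,f(X_i))$ since its summand is $j$-independent; the MMD lower bound then reads
\begin{align*}
aD_{{\rm MMD},f,n}(r_n) \ge \tfrac{1}{n(n-1)}\sum_{i\ne j}\partial\phi(r^*(V_i,f(X_j))) - \tfrac{1}{n}\sum_i r_n(V_i,f(X_i))\partial\phi(r^*(V_i,f(X_i))).
\end{align*}
Combining these estimates yields $D_\phi(f) - D_{\phi,n}(f) \le T + aD_{{\rm MMD},f,n}(r_n)$, where
\begin{align*}
T = \bigl(D_\phi^{r^*}(f) - D_{\phi,n}^{r^*}(f)\bigr) + \tfrac{1}{n}\sum_i \partial\phi(r^*(V_i,f(X_i)))r^*(V_i,f(X_i)) - \tfrac{1}{n(n-1)}\sum_{i\ne j}\partial\phi(r^*(V_i,f(X_j))).
\end{align*}

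The final step is to concentrate $T$. Its expectation vanishes: the first bracket is trivially centered, and the change-of-measure $\int h\,r^*\,d\pjoin{V,f(X)} = \int h\,d\pidep{V,f(X)}$ together with the fact that $(V_i,f(X_j)) \sim \pidep{V,f(X)}$ for $i \ne j$ forces the last two terms to share the common mean $\Mean[\partial\phi(r^*(V',f(X)))]$. I then apply McDiarmid to $T(W_1,\ldots,W_n)$ with $W_i = (V_i,X_i)$: a single swap of $W_k$ shifts the $\phi(r^*)$-average by at most $\max\{\phi(c_\ell),\phi(c_u)\}/n$ (using $0 \in \partial\phi(1)$ to force $\phi \ge 0$ with minimum at $1$), the $\partial\phi(r^*) r^*$-average by at most $(\partial\phi(c_u)c_u - \partial\phi(c_\ell))/n$ (using $|\partial\phi(c_\ell)c_\ell| \le -\partial\phi(c_\ell)$ since $c_\ell \le 1$), and the off-diagonal double sum by at most $2(\partial\phi(c_u) - \partial\phi(c_\ell))/n$ because exactly $2(n-1)$ of its $n(n-1)$ summands touch index $k$. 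McDiarmid then delivers $T \le c\sqrt{2t/n}$ with probability $1 - e^{-t}$ for the constant $c$ stated in the theorem.

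The hard part is the bounded-difference bookkeeping for the non-symmetric second-order $U$-statistic $\tfrac{1}{n(n-1)}\sum_{i\ne j}\partial\phi(r^*(V_i,f(X_j)))$, together with the sign analysis of $\phi$ and $\partial\phi$ on either side of $1$ needed to obtain explicit rather than absolute-value constants; everything else — the first-order convex inequality, the MMD substitution, and the change-of-measure identity — is routine.
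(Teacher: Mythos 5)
Your proof follows the paper's argument essentially step for step: the same decomposition through $D_{\phi,n}^{r^*}(f)$, the same first-order convexity bound via the subdifferential, the same insertion of the U-statistic of $\partial\phi(r^*(V_i,f(X_j)))$ together with the substitution of $g=\partial\phi(r^*)/a$ into the MMD supremum, and the same mean-zero and range computations for the residual term. The only deviation is the final concentration step, where you invoke McDiarmid's bounded-differences inequality instead of the paper's appeal to Hoeffding's exponential inequality for U-statistics; your bounded-difference bookkeeping is correct and yields a deviation constant that is no larger than the stated $c$, so the claim follows.
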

The proof of this theorem is found in \cref{sec:proofs}.
As proved in \cref{thm:div-bound}, the $f$-divergences can be bounded above by the empirical $f$-divergences, the empirical MMD, and $O(\sqrt{1/n})$. We minimize the error between the $f$-divergences and the empirical $f$-divergences by minimizing the empirical MMD. In addition, the error bound does not depend on the complexity of $\fset{G}$. This implies that in order to guarantee the upper bound on the $f$-divergences, we should choose $\fset{G}$ so that it is large enough to satisfy $\partial\phi(r^*(V, f(X))) \in \fset{G}$. A large $\fset{G}$, however, can lead an over estimation of the $f$-divergences.

The convergence rate, i.e., the absolute value of the estimation error, as shown by \citep{nguyen2010estimating} is dependent on the convergence rate of the empirical process with respect to $\fset{G}$. However, the upper bound proved by \cref{thm:div-bound} does not contain the complexity term of $\fset{G}$, such as Rademacher complexity, the covering entropy and the bracketing entropy, and thus is tighter than the convergence rate.

\section{Fairness-Aware Learning with a Divergence Estimation}
\label{sec:fal-div-est}

In this section, we provide an algorithm for solving \cref{eq:erm-div-rest} that includes the introduced estimation procedure for the $f$-divergences. We will then show that the algorithm can be formulated as a convex optimization problem.

\subsection{Algorithm for Fairness-Aware Learning with $f$-Divergence Estimation}

Following the estimation procedure described in \cref{sec:div-est}, we define the optimization problem of our fairness-aware learning as
\begin{align}
 \min_{f \in \fset{F}, r_n \ge 0} ~ R_n(f) ~ \subto ~ D_{\phi,n}(f) + a_n D_{{\rm MMD},f,n}(r_n) \le \eta, \label{eq:fal}
\end{align}
where $a_n$ is the constant larger than $a$ that was defined in \cref{thm:div-bound}.
As indicated by \cref{thm:div-bound}, $D_\phi(f) \le D_{\phi,n}(f) + a_n D_{{\rm MMD},f,n}(r_n) + O(\sqrt{1/n})$ holds, which guarantees that the $f$-divergence of the resultant hypothesis of \cref{eq:fal} is less than $\eta + O(\sqrt{1/n})$.

Let us consider the effect of the choice of $\phi$ on the estimation error of the divergence. The upper bound of the estimation error shown in \cref{thm:div-bound} does not depend on the choice of $\phi$. Nevertheless, the choice of $\phi$ changes the constant $c$. Letting $c_\ell = e^{-t}$ and $c_u = e^t$, \cref{fig:phi-c} shows the shape of $\phi$ and the value of $c$ corresponding to $t$ for various functions $\phi$. As shown in \cref{fig:const-c}, the smallest $c$ is that for the Hellinger distance, and thus of these four divergences, it has the tightest bound on the probability ratio $r^*$.

\begin{figure}[t]
 \centering
 \begin{subfigure}[b]{.45\linewidth}
  \includegraphics[width=\textwidth]{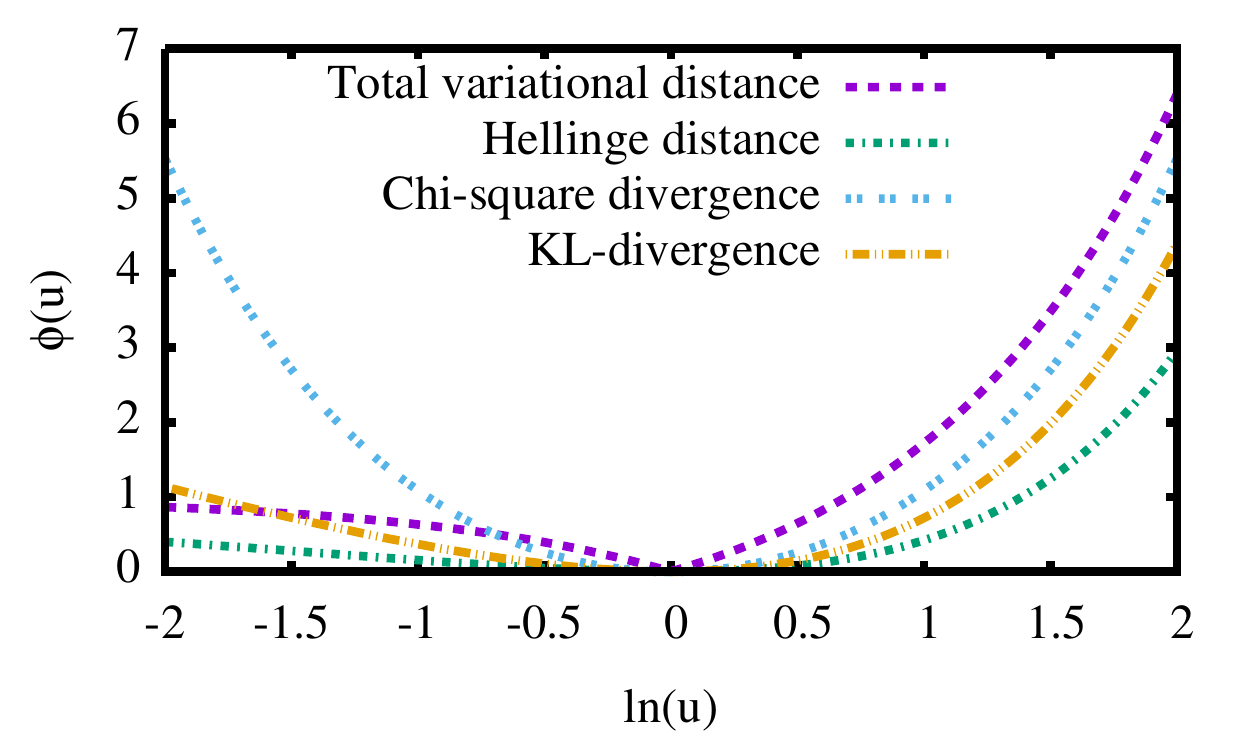}
  \caption{$\phi$}\label{fig:phi}
 \end{subfigure}%
 \begin{subfigure}[b]{.45\linewidth}
  \includegraphics[width=\textwidth]{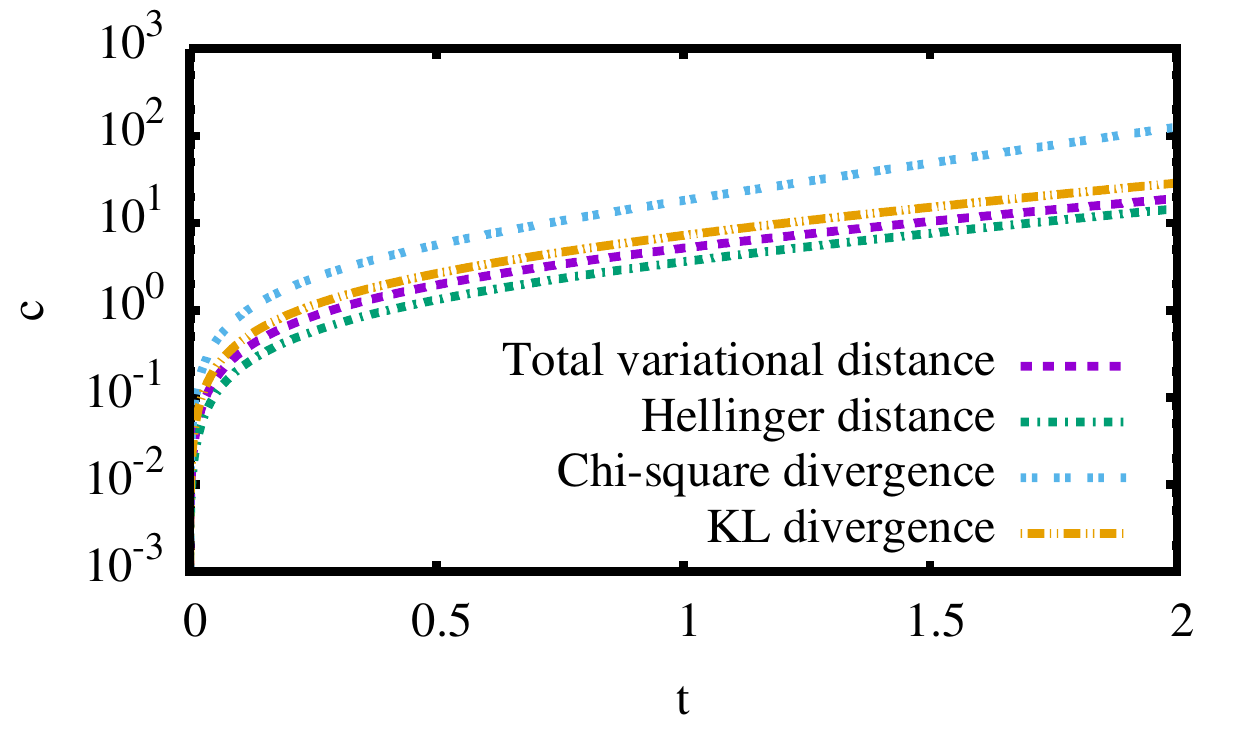}
  \caption{$c$}\label{fig:const-c}
 \end{subfigure}
 \caption{The shape of $\phi$ and the value of $c$ in \cref{thm:div-bound} for various $\phi$, where $c_\ell = e^{-t}$ and $c_u = e^t$.}\label{fig:phi-c}
\end{figure}

\subsection{Optimization}

The necessary condition of convexity of \cref{eq:fal} is the linearity of the functions $g \in \fset{G}$ with respect to $f$. With mild assumptions, it can be made convex for any choice of $\fset{G}$ by a simple reformulation.
\begin{assumption}\label{asm:hypothesis}
 The hypothesis $f \in \fset{F}$ is formed as $f(x) = \argmax_{y \in \dom{Y}} \theta(x, y)$, and $\theta \in \Theta$ is linear with respect to the parameters.
\end{assumption}
With this assumption, the function on the RKHS $\mathcal{H}$ is given as $\theta(x, y) = \langle\Phi(x,y), \vec{w}\rangle_\mathcal{H}$, where $\Phi:\domprod{XY}\to\dom{H}$ and $\vec{w} \in \dom{H}$. The optimization problem in \cref{eq:fal} can be rearranged as
\begin{align}
 \min_{f \in \fset{F}}  ~ R_n(f) ~ \subto ~ \min_{r_n \ge 0} (D_{\phi,n}(f) + a_n D_{{\rm MMD},f,n}(r_n)) \le \eta. \label{eq:opt-gu}
\end{align}
Since $r_n$ is not appeared in the objective function in original optimization problem \cref{eq:fal}, we change the optimization problem so that the optimization with respect to $r_n$ is only appeared in the constraint.
Following the derivation of the dual problem in \citep{nguyen2010estimating}, we have the dual form of the constraint as
\begin{align}
 \min_{r_n \ge 0} (D_{\phi,n}(f) + a_n D_{{\rm MMD},f,n}(r_n)) = \max_{g \in \fset{G}}\paren*{ E_{\phi,f,n}(g) },
\end{align}
where
\begin{align}
 E_{\phi,n}(g) = \frac{1}{n(n-1)}\sum_{1 \le i \ne j \le n} g(v_j, f(x_i)) - \frac{1}{n}\sum_{i = 1}^n \phi^*(g(v_i, f(x_i))) + \frac{1}{2a_n}\norm{g}_{\dom{H}}^2,
\end{align}
and $\phi^*(v) = \sup_{u}(uv - \phi(u))$. Letting $\vec\gamma \in \RealSet^{n \times c}$, we can rewrite the optimization problem in \cref{eq:opt-gu} as
\begin{align}
 \min_{f \in \fset{F}, \vec\gamma \in \RealSet^{n\times c}} ~ R_n(f) ~
\subto ~ I[f(x_i) = j] = \gamma_{ij} ~ \forall i, j, ~ \max_{g \in \fset{G}} E_{\phi,n}(\vec\gamma, g) \le \eta, \label{eq:opt-eq}
\end{align}
where
\begin{align}
 E_{\phi,n}(\vec\gamma, g) = \frac{1}{n(n-1)}\sum_{1 \le i \ne j \le n} \sum_{k = 1}^c \gamma_{ik} g(v_j, k)  - \frac{1}{n}\sum_{i = 1}^n \sum_{k = 1}^c \gamma_{ik} \phi^*(g(v_i, k)) + \frac{1}{2a_n}\norm{g}_{\dom{H}}^2.
\end{align}
From the definition of $f$, we have $ I[f(x_i) = j] = I[\max_{k \ne j}\theta(x_i, k) - \theta(x_i, j) \le 0]$.
Let $\Delta_c = \cbrace{ \vec{x} \in \RealSet^c | x_i \ge 0 ~ \forall i, \sum_{i=1}^c x_i = 1 }$. Then, we relax the indicator function as follows:
\begin{align}
 \min_{f \in \fset{F}, \vec\gamma \in \Delta^n_c} ~ R_n(f) ~
 \subto ~ \max_{k \ne j} \theta(x_i, k) - \theta(x_i, j) \le -\gamma_{ij},
   ~ \max_{g \in \fset{G}} E_{\phi,n}(\vec\gamma, g) \le \eta. \label{eq:opt-sol}
\end{align}
This optimization problem is convex, and its solution is equivalent to \cref{eq:fal}. We prove this claims by the following corollary and theorem.
\begin{corollary}\label{lem:convex}
 If \cref{asm:hypothesis} holds, and $\ell$ is convex with respect to $\theta \in \Theta$, the optimization problem in \cref{eq:opt-sol} is a convex optimization problem.
\end{corollary}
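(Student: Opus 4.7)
The corollary claims only convexity of \cref{eq:opt-sol}, so my plan is to verify the objective, each constraint, and the feasible domain separately. Under \cref{asm:hypothesis} the hypothesis is determined by $\theta(x,y) = \langle \Phi(x,y), \vec w\rangle_{\dom{H}}$ with parameter $\vec w \in \dom{H}$; I therefore treat the decision variables as the pair $(\vec w, \vec\gamma) \in \dom{H}\times \Delta^n_c$, and note that $\Delta^n_c$ is a product of simplices and hence convex.

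For the objective $R_n(f) = \tfrac{1}{n}\sum_i \ell(Y_i, \theta(X_i,\cdot))$, the hypothesis that $\ell$ is convex in $\theta \in \Theta$ combined with the affineness of $\vec w \mapsto \theta$ yields convexity in $\vec w$; since $R_n$ does not depend on $\vec\gamma$, it is jointly convex. For the first family of constraints I would rewrite $\max_{k\ne j}\theta(x_i,k) - \theta(x_i,j) \le -\gamma_{ij}$ equivalently as the family of affine inequalities $\theta(x_i,k) - \theta(x_i,j) + \gamma_{ij} \le 0$ indexed by $k \ne j$; each of these is affine in $(\vec w, \vec\gamma)$ and so carves out a half-space, and their intersection over $(i,j,k)$ is convex.

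The remaining step is the divergence-based constraint $\sup_{g\in\fset{G}} E_{\phi,n}(\vec\gamma, g) \le \eta$. The crucial observation is that, for every fixed $g$, $E_{\phi,n}(\vec\gamma, g)$ is affine in $\vec\gamma$: in the definition, the first two sums are of the form $\sum_{i,k}\gamma_{ik}\cdot c_{ik}(g)$ with coefficients $c_{ik}(g)$ depending only on $g$ (in particular $\phi^*$ is evaluated at $g(v_i,k)$, which is $\vec\gamma$-independent), while $\tfrac{1}{2a_n}\norm{g}_{\dom{H}}^2$ does not involve $\vec\gamma$ at all. Therefore $\vec\gamma \mapsto \sup_{g\in\fset{G}} E_{\phi,n}(\vec\gamma,g)$ is a pointwise supremum of affine functions and is convex, and its $\eta$-sublevel set is a convex subset of $\Delta^n_c$; since this constraint is independent of $\vec w$, it remains convex in the joint variable. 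Combining the three pieces, \cref{eq:opt-sol} has a convex objective over a convex feasible set. The only substantive subtlety I anticipate is confirming affineness of $E_{\phi,n}(\vec\gamma,g)$ in $\vec\gamma$—in particular that $\phi^*$ introduces no $\vec\gamma$-dependent nonlinearity—after which the remaining steps are standard facts that pointwise suprema of affine functions are convex and that affine preimages preserve convexity.
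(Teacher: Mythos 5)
Your proposal is correct and follows essentially the same route as the paper's proof: the objective is convex because $\ell$ is convex in the linearly-parameterized $\theta$, the margin constraints are (maxima of) affine functions of $(\theta,\vec\gamma)$, and the divergence constraint is a pointwise supremum over $g$ of functions affine in $\vec\gamma$, hence convex. Your extra care in spelling out that $\phi^*$ is evaluated only at $g(v_i,k)$ and so introduces no $\vec\gamma$-dependence is exactly the observation the paper's one-line claim of linearity relies on.
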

\begin{theorem}\label{thm:opt-equiv}
 The solution of the optimization problem in \cref{eq:opt-sol} is equivalent to the solution of \cref{eq:opt-eq}.
\end{theorem}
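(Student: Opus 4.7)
The plan is to prove the claim by establishing a bidirectional feasibility correspondence between the two problems that preserves the objective $R_n(f)$, so that their optimal values and optimal $f$'s coincide.

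Forward direction. I would take any feasible $(\theta, \vec\gamma)$ of \cref{eq:opt-sol} and argue that $\vec\gamma$ is forced to be the vertex encoding $\gamma_{ij} = I[f(x_i) = j]$, where $f(x_i) = \argmax_k \theta(x_i, k)$. Fixing $i$ and letting $j^\star = f(x_i)$, for every $j \ne j^\star$ one has $\max_{k \ne j}\theta(x_i, k) \ge \theta(x_i, j^\star) \ge \theta(x_i, j)$, which combined with $\gamma_{ij} \ge 0$ forces $\gamma_{ij} = 0$ via the margin constraint; the simplex constraint $\sum_j \gamma_{ij} = 1$ then fixes $\gamma_{i, j^\star} = 1$. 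Substituting this encoding into $E_{\phi, n}(\vec\gamma, g)$ reduces the MMD constraint of \cref{eq:opt-sol} to the one in \cref{eq:opt-eq}, so $(f, \vec\gamma)$ is feasible in \cref{eq:opt-eq} with identical $R_n(f)$.

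Reverse direction. I would take $(f, \vec\gamma)$ feasible in \cref{eq:opt-eq}, note that $\vec\gamma$ is already a simplex vertex, and rescale $\theta$ to satisfy the margin constraint in \cref{eq:opt-sol}. By \cref{asm:hypothesis}, $\Theta$ is a linear space in its parameters and hence closed under positive scaling; setting $\theta' = \theta / \rho$ with $\rho = \min_i [\theta(x_i, f(x_i)) - \max_{k \ne f(x_i)}\theta(x_i, k)] > 0$ scales every margin to at least $1$. Since $\argmax$ is scale-invariant, $f$ and $R_n(f)$ are unchanged, and $E_{\phi, n}(\vec\gamma, g)$ depends on $\theta$ only through $\vec\gamma$, so the MMD constraint is preserved. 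Thus $(\theta', \vec\gamma)$ is feasible in \cref{eq:opt-sol} with the same objective.

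The main obstacle is the rescaling step in the reverse direction: it requires that $R_n(f)$ depend on $\theta$ only through the discrete prediction $f = \argmax_y \theta(x, y)$ (so scaling does not alter it) and that $\Theta$ be closed under positive scaling, both of which follow from \cref{asm:hypothesis} and the definition of $R_n$. A subsidiary concern is ties in the $\argmax$ where $\rho = 0$, a degenerate measure-zero case that one handles by a tie-breaking convention on $f$ or by an infinitesimal perturbation of $\theta$ preserving both $f$ and $R_n(f)$.
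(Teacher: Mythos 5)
Your forward direction is, in substance, the paper's entire proof: the paper argues exactly that for each $i$ the margin constraint can hold with $\gamma_{ij}>0$ for at most one $j$ (a strict argmax of $\theta(x_i,\cdot)$), so the simplex constraint forces $\gamma_{ij}=I[f(x_i)=j]$, and it then concludes the theorem solely from the fact that feasible points of \cref{eq:opt-sol} satisfy the constraints of \cref{eq:opt-eq}. On that half you and the paper coincide, down to the role played by $\gamma_{ij}\ge 0$ and $\sum_j\gamma_{ij}=1$.

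The reverse direction is your addition --- the paper omits it, although a genuine two-sided equivalence of solutions does require it --- and as written it has two gaps. First, you verify the margin constraint only at $j=f(x_i)$, where $\gamma_{ij}=1$. The constraint is imposed for every $j$, and at any $j\ne f(x_i)$ with $\gamma_{ij}=0$ it reads $\max_{k\ne j}\theta'(x_i,k)-\theta'(x_i,j)\le 0$, i.e., $j$ must itself be a weak argmax; this cannot hold simultaneously with the strict margin $\max_{k\ne f(x_i)}\theta'(x_i,k)-\theta'(x_i,f(x_i))\le -1$, and positive rescaling preserves all orderings so it cannot repair this. (The paper's own proof likewise only ever invokes the constraint at indices with $\gamma_{ij}>0$, so this is arguably a defect of the formulation of \cref{eq:opt-sol} itself; but a feasibility-transfer argument has to either adopt that reading explicitly or observe that the constraint set is empty as literally written.) Second, your rescaling step needs $R_n(f)$ to depend on $\theta$ only through the discrete prediction $\argmax_y\theta(x,y)$; that is consistent with the literal definition of $R_n$, but it is in tension with \cref{lem:convex}, whose hypothesis that $\ell$ is convex in $\theta$ presupposes a surrogate loss evaluated on $\theta$ itself, in which case $\theta\mapsto\theta/\rho$ changes the objective and the ``same objective'' claim fails. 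Your remark on ties is fine and is more care than the paper takes on that point.
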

The proofs of the corollary and the theorem can be found in \cref{sec:proofs}.

\section{Generalization Error Analysis}
\label{sec:gen-err}
We consider the generalization error bound of the learned hypothesis $f_n \in \fset{F}$ that is obtained by the algorithm described in \cref{sec:fal-div-est}. In our analysis, we use the two type of the {\em Rademacher complexity}, which measures the complexity of the class of the functions $f:\dom{Z}\to\RealSet$ and are defined as
\begin{align}
  \Rad_n(\fset{F}) = \Mean\bracket*{ \sup_{f \in \fset{F}}\frac{1}{n}\sum_{i=1}^n \sigma_i f(Z_i)}, \quad \Rad^{\rm abs}_n(\fset{F}) = \Mean\bracket*{ \sup_{f \in \fset{F}}\frac{1}{n}\abs*{\sum_{i=1}^n \sigma_i f(Z_i)}},
\end{align}
where $(\sigma_i)$ are the independent Rademacher variables, that is, $\p(\sigma_i = +1) = \p(\sigma_i = -1) = 1/2$.
\if0
\citet{bartlett2005local} proved the following theorem for the generalization error bound based on Bousquet's inequality:
\begin{theorem}[\citet{bartlett2005local}]\label{thm:gen-bound}
  Let $\fset{H} = \cbrace{h: \domprod{YX}\to\RealSet | h(Y,X) = \ell(Y,f(X)) - \ell(Y,f^*(X)),  f \in \fset{F}}$, and let $f_n \in \fset{F}$ be a hypothesis learned from the obtained set of samples $\set{S}_n$. Suppose that $h(y,x) - h(y',x') \le c$ for any $h \in \fset{H}$, $y,y' \in \dom{Y}$ and $x, x' \in \dom{X}$. Then, with probability at least $1-e^{-t}$
  \begin{align}
    R(f_n) - R(f^*) \le R_n(f_n) - R_n(f^*) + 4\Rad_n(\fset{H}) + \sigma_{\fset{H}}\sqrt{\frac{2t}{n}} + c\frac{4t}{3n},
  \end{align}
  where $\sigma^2_{\fset{H}} = \sup_{h \in \fset{H}}\Var[h(Y,X)]$.
\end{theorem}
\fi

In the generalization error analysis, since our fairness-aware learning algorithm have the probabilistic error $c\sqrt{2\tau/n}$, we consider the set of hypotheses defined as
\begin{align}
  \fset{F}_\tau = \cbrace{f \in \fset{F} | D_\phi(f) \le \eta + c\sqrt{2\tau/n}}.
\end{align}
where $c$ is defined as in \cref{thm:div-bound}. \cref{thm:div-bound} shows that with probability at least $1-e^{-\tau}$, $f_n \in \fset{F}_\tau$.
Hence, application of the theorem in \citep{bartlett2005local}, which is appeared in \cref{sec:gen-bartlett}, yields the generalization error bound for our algorithm.
\begin{corollary}
  Let $f^*_\tau$ be a hypothesis such that $R(f^*_\tau) = \inf_{f \in \fset{F}_\tau}R(f)$. Let $\fset{H}_\tau = \cbrace{h: \domprod{YX}\to\RealSet | h(Y,X) = \ell(Y,f(X)) - \ell(Y,f^*(X)),  f \in \fset{F}_\tau}$, and let $f_n \in \fset{F}$ be a hypothesis learned from the obtained set of samples $\set{S}_n$. Suppose that $h(y,x) - h(y',x') \le c$ for any $h \in \fset{H}_\tau$, $y,y' \in \dom{Y}$, and $x, x' \in \dom{X}$. Then, with probability at least $1-e^{-t}-e^{-\tau}$,
  \begin{align}
    R(f_n) - R(f_\tau^*) \le R_n(f_n) - R_n(f_\tau^*) + 4\Rad_n(\fset{H}_\tau) + \sigma_{\fset{H}_\tau}\sqrt{\frac{2t}{n}} + c\frac{4t}{3n}.
  \end{align}
\end{corollary}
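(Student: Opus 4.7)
The plan is to reduce the corollary to a direct application of the Bartlett--Bousquet--Mendelson generalization bound recalled in \cref{sec:gen-bartlett}, once I have shown that the learned hypothesis $f_n$ lies in the restricted class $\fset{F}_\tau$ with high probability. The full argument will consist of one new probabilistic inclusion (supplied by \cref{thm:div-bound}) combined, via a union bound, with the failure event of the cited theorem.

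First, I would apply \cref{thm:div-bound} to $f_n$ with the confidence parameter set to $\tau$ in place of $t$. The constraint enforced by the fairness-aware learning problem \cref{eq:fal} reads $D_{\phi,n}(f_n) + a_n D_{{\rm MMD},f_n,n}(r_n) \le \eta$, and since $a_n$ is chosen so that $a_n \ge a$, this also gives $D_{\phi,n}(f_n) + a\, D_{{\rm MMD},f_n,n}(r_n) \le \eta$. Substituting into the bound of \cref{thm:div-bound} yields, on an event $A_\tau$ of probability at least $1-e^{-\tau}$, the inequality $D_\phi(f_n) \le \eta + c\sqrt{2\tau/n}$, i.e.\ $f_n \in \fset{F}_\tau$.

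Next, I would invoke the generalization-error theorem of \cref{sec:gen-bartlett} with the base hypothesis class taken to be $\fset{F}_\tau$ and the reference hypothesis taken to be $f^*_\tau$. The boundedness hypothesis on the excess-loss class $\fset{H}_\tau$ is supplied by assumption, so the cited theorem produces, on an event $B_t$ of probability at least $1-e^{-t}$, a uniform excess-risk bound over $\fset{F}_\tau$ whose right-hand side is exactly the one appearing in the corollary. On the intersection $A_\tau \cap B_t$, which has probability at least $1-e^{-t}-e^{-\tau}$ by the union bound, $f_n$ lies in $\fset{F}_\tau$, and the uniform bound from the second step can therefore be evaluated at $f = f_n$, producing the claimed inequality.

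The only thing to check --- and not a real obstacle --- is that $\fset{F}_\tau$ is a deterministic class: it depends only on the population quantity $D_\phi$ and on the numerical parameter $\tau$, so the Rademacher complexity $\Rad_n(\fset{H}_\tau)$ and the variance $\sigma_{\fset{H}_\tau}$ appearing in the bound are non-random, and the cited uniform-concentration result applies without modification. All of the probabilistic work is done by \cref{thm:div-bound} (to place $f_n$ in $\fset{F}_\tau$) and by the cited Bartlett--Bousquet--Mendelson bound (to control the risk uniformly over $\fset{F}_\tau$); no new concentration inequality is required.
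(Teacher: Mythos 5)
Your proposal is correct and follows essentially the same route as the paper: the paper likewise uses \cref{thm:div-bound} to place $f_n$ in $\fset{F}_\tau$ with probability at least $1-e^{-\tau}$ and then applies the bound of \citet{bartlett2005local} (\cref{thm:gen-bound}) to the restricted class, combining the two events by a union bound. Your added remark that $\fset{F}_\tau$ is a deterministic (data-independent) class is a useful clarification the paper leaves implicit, but it does not change the argument.
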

Since $\fset{H}_\tau \subseteq \fset{H}$, we have $\Rad_n(\fset{H}_\tau) \le \Rad_n(\fset{H}), ~ \sigma_{\fset{H}_\tau} \le \sigma_{\fset{H}}$. Therefore, the convergence rate of the algorithm constrained by the $f$-divergences is lower than that of the algorithm without the constraint.

While our algorithm guarantees an upper bound on the $f$-divergences, it reduces the classification performance, as compared to the classifier learned by ERM. Accordingly, let us consider the generalization error of the optimal hypotheses with and without the restriction on the $f$-divergences:
\begin{align}
  R(f^*_\tau) - R(f^*). \label{eq:err-rest}
\end{align}
This error represents the reduction in the classification performance caused by restricting the $f$-divergences. Since the error cannot be directly evaluated, we define the estimator of \cref{eq:err-rest} as
\begin{align}
 R_n(f_n) - \inf_{f \in \fset{F}}R_n(f).
\end{align}
Our interest is to derive the convergence rate of this estimator.
\if0
To this end, we use the measure of the complexity for the class of the functions similar to the Rademacher complexity defined as
\begin{align}
  \Rad^{\rm abs}_n(\fset{F}) = \Mean\bracket*{ \sup_{f \in \fset{F}}\frac{1}{n}\abs*{\sum_{i=1}^n \sigma_i f(Z_i)}},
\end{align}
where $(\sigma_i)$ are the independent Rademacher variables.
\fi
We denote $\ell\circ\fset{F}' = \cbrace{ h:\domprod{XY}\to\RealSet | h(Y,X) = \ell(Y,f(X)), f \in \fset{F}'}$ for any $\fset{F}' \subseteq \fset{F}$, then the following theorem shows the convergence rate of the estimator.
\begin{theorem}\label{thm:rest-error-bound}
  Suppose that  $\ell(y,f(x)) - \ell(y',f(x')) \le c$ for any $f \in \fset{F}$ and $(y,x),(y',x') \in \dom{Z}$. Then, with probability at least $1-2e^{-t}-e^{-\tau}$,
  \begin{multline}
    \abs*{R(f^*_\tau) - R(f^*) - \paren*{R_n(f_n) - \inf_{f \in \fset{F}}R_n(f)} } \\ \le 4\Rad^{\rm abs}_n(\ell\circ\fset{F}_\tau) + 4\Rad^{\rm abs}_n(\ell\circ\fset{F}) + (\sigma_{\ell\circ\fset{F}_\tau} + \sigma_{\ell\circ\fset{F}})\sqrt{\frac{2t}{n}} + c\frac{8t}{3n}.
  \end{multline}
\end{theorem}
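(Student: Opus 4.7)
The plan is to combine the high-probability inclusion $f_n \in \fset{F}_\tau$ from \cref{thm:div-bound} (failure probability at most $e^{-\tau}$) with two uniform concentration bounds on the empirical risk process, one over $\fset{F}_\tau$ and one over $\fset{F}$. Set $\hat{f}_n \in \argmin_{f \in \fset{F}} R_n(f)$, $A := R(f^*_\tau) - R(f^*)$ and $B := R_n(f_n) - R_n(\hat{f}_n)$; the goal is to control $|A - B|$.

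For each $\fset{F}' \in \cbrace{\fset{F}_\tau, \fset{F}}$, standard symmetrization yields $\Mean\bracket*{\sup_{f \in \fset{F}'}|R_n(f) - R(f)|} \le 2\Rad^{\rm abs}_n(\ell \circ \fset{F}')$, and Bousquet's inequality applied to the centred class $\ell \circ \fset{F}'$ gives, with probability at least $1 - e^{-t}$,
\[
\sup_{f \in \fset{F}'}|R_n(f) - R(f)| \le 2\Rad^{\rm abs}_n(\ell \circ \fset{F}') + \sigma_{\ell\circ\fset{F}'}\sqrt{\tfrac{2t}{n}} + c \cdot \tfrac{4t}{3n}.
\]
A union bound of these two events with the event from \cref{thm:div-bound} gives the overall failure probability $2e^{-t} + e^{-\tau}$ appearing in the statement.

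On the intersection of these events, I decompose $A - B = \bracket*{R(f^*_\tau) - R_n(f_n)} - \bracket*{R(f^*) - R_n(\hat{f}_n)}$. For the upper-bound direction, $f_n \in \fset{F}_\tau$ combined with the optimality of $f^*_\tau$ over $\fset{F}_\tau$ yields $R(f^*_\tau) \le R(f_n)$, so the first bracket is at most $R(f_n) - R_n(f_n) \le \sup_{f \in \fset{F}_\tau}|R(f) - R_n(f)|$; similarly, the optimality $R_n(\hat{f}_n) \le R_n(f^*)$ bounds the absolute value of the second bracket by $\sup_{f \in \fset{F}}|R(f) - R_n(f)|$. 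Plugging in the concentration bounds then controls $A - B$ from above.

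The main obstacle is the reverse direction $B - A$: one cannot simply invoke $R_n(f_n) \le R_n(f^*_\tau)$, because $f^*_\tau$ need not satisfy the strictly more demanding empirical constraint that the algorithm enforces. My plan is to write $R_n(f_n) - R(f^*_\tau) = \bracket*{R_n(f_n) - R(f_n)} + \bracket*{R(f_n) - R(f^*_\tau)}$, control the first term by $\sup_{f \in \fset{F}_\tau}|R(f) - R_n(f)|$, and then bound the non-negative excess risk $R(f_n) - R(f^*_\tau)$ by adding and subtracting the empirical quantities $R_n(f_n)$ and $R_n(f^*_\tau)$, which calls for a second application of $\sup_{f \in \fset{F}_\tau}|R - R_n|$; the analogous step on the $\fset{F}$ side uses the standard ERM excess-risk identity for $\hat{f}_n$. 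This doubled use of each supremum is precisely what upgrades $2\Rad^{\rm abs}_n$ into the $4\Rad^{\rm abs}_n$ of the theorem, duplicates the $\sigma\sqrt{2t/n}$ contribution on each class, and doubles the constant from $c\cdot 4t/(3n)$ to $c\cdot 8t/(3n)$. Combining both directions then yields the claimed two-sided bound on $|A - B|$.
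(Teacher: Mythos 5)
Your top-level decomposition coincides with the paper's: by the triangle inequality, $\abs*{A-B}\le\abs*{R(f^*_\tau)-R_n(f_n)}+\abs*{R(f^*)-\inf_{f\in\fset{F}}R_n(f)}$, each piece is controlled by a uniform deviation bound over $\fset{F}_\tau$ and $\fset{F}$ respectively (the paper's \cref{cor:conv-tau-bound} and \cref{thm:conv-bound}), and the failure probability $2e^{-t}+e^{-\tau}$ comes from a union bound with the event $f_n\in\fset{F}_\tau$ of \cref{thm:div-bound}. However, two of your steps do not go through. The first is the constant bookkeeping. Bousquet's inequality carries the variance proxy $v=\sigma^2_{\ell\circ\fset{F}'}+2c\,\Mean\sup_{f\in\fset{F}'}\abs*{R(f)-R_n(f)}$; absorbing the resulting cross term $2\sqrt{ct\,\Mean\sup(\cdot)/n}$ by AM--GM doubles the expectation term, so a \emph{single} application already yields $\sup_{f\in\fset{F}'}\abs*{R(f)-R_n(f)}\le 4\Rad^{\rm abs}_n(\ell\circ\fset{F}')+\sigma_{\ell\circ\fset{F}'}\sqrt{2t/n}+4ct/(3n)$, not the $2\Rad^{\rm abs}_n$ version you state. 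The theorem's factor $4$ and its $8ct/(3n)=2\times 4ct/(3n)$ arise from using each supremum exactly once per class; if, as you propose, each supremum is used twice, the $\sigma\sqrt{2t/n}$ and $4ct/(3n)$ contributions double as well, landing at $2(\sigma_{\ell\circ\fset{F}_\tau}+\sigma_{\ell\circ\fset{F}})\sqrt{2t/n}+16ct/(3n)$ (or $8\Rad^{\rm abs}_n$ with the correct single-use bound), which is not the claimed result. The paper avoids any doubling on the $\fset{F}$ side by observing that $R(f^*)-\inf_{f\in\fset{F}}R_n(f)=\sup_{f\in\fset{F}}(R(f^*)-R_n(f))$ is sandwiched between $R(f^*)-R_n(f^*)$ and $\sup_{f\in\fset{F}}(R(f)-R_n(f))$, both within a single $\sup_{f\in\fset{F}}\abs*{R(f)-R_n(f)}$ of zero; the two-sided ERM excess-risk identity you invoke bounds a different quantity and is not needed.

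The second, more serious issue is the reverse direction on the $\fset{F}_\tau$ side. You correctly identify that $R_n(f_n)\le R_n(f^*_\tau)$ cannot be asserted, since $f^*_\tau$ need not satisfy the empirical constraint actually enforced by the algorithm. But your remedy --- adding and subtracting $R_n(f_n)$ and $R_n(f^*_\tau)$ inside $R(f_n)-R(f^*_\tau)$ --- reintroduces precisely the term $R_n(f_n)-R_n(f^*_\tau)$ whose sign and magnitude you said you cannot control; nothing in your plan removes it, so the required upper bound on $R_n(f_n)-R(f^*_\tau)$ is never established. The paper's route through \cref{cor:conv-tau-bound} instead uses the inclusion of the empirically feasible set in $\fset{F}_\tau$ (a consequence of \cref{thm:div-bound}) to get $R_n(f_n)\ge\inf_{f\in\fset{F}_\tau}R_n(f)$, hence $R(f^*_\tau)-R_n(f_n)\le\sup_{f\in\fset{F}_\tau}(R(f)-R_n(f))$, and then applies the same sandwich argument as in \cref{thm:conv-bound} with a single supremum over $\fset{F}_\tau$. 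You should either reproduce that argument or exhibit an empirically feasible comparator whose risk can be related to $R(f^*_\tau)$; as written, the lower half of your two-sided bound remains open.
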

The proof of this theorem is appeared in \cref{sec:proofs}.

\section{Conclusions}
In this paper, we considered fairness-aware learning for a classification problem, with the aim of learning the classifier that returns the prediction with the lowest misclassification rate and the lowest dependence on the viewpoint.
Our contributions are as follows:
(1) We propose a novel generalized procedure for estimating the $f$-divergences for fairness-aware learning. Our generalized estimation procedure provides a tighter upper bound of the estimation error by introducing the maximum mean discrepancy.
(2) We formulate a general ERM framework for fairness-aware learning algorithm that is based on the empirical estimation procedure of the $f$-divergences, and that can guarantee an upper bound on the generalization dependency. Furthermore, we provide an analysis of the generalization error of the proposed fairness-aware learning algorithm.


\bibliographystyle{plainnat}
\bibliography{./reference}

\appendix

\section{Maximum Mean Discrepancy with Functions on a Reproducing Kernel Hilbert Space}
\label{sec:mmd-rkhs}

Since we need to solve the maximization problem in $D_{{\rm MMD}, f, n}(r)$, evaluation of the empirical MMD takes considerable cost causing use of the iterative algorithm.
However, if the elements of the class of the functions $\fset{G}$ are represented by the inner products of the parameters, which includes the functions in a reproducing kernel Hilbert space~(RKHS), the empirical MMD can be efficiently calculated. Let $k:\domprod{VYVY}\to\RealSet$ be a universal kernel, and let $\dom{H}$ be the RKHS induced by $k$. Let $\Phi:\domprod{VY}\to\dom{H}$ be the canonical feature map induced by $k$.
\begin{corollary}\label{cor:mmd-kernel}
Suppose that $\fset{G} = \cbrace{g | g(V,f(X)) = \abrace{\beta,\Phi(V,f(X))}, \norm{\beta}_{\dom{H}} \le 1}$. Then, the empirical MMD is equivalent to
\begin{align}
   D_{{\rm MMD}, f, n}(r) = \norm*{\frac{1}{n(n-1)}\sum_{1 \le i \ne j \le n}\bracket*{  \Phi(V_i, f(X_j)) - r(V_i, f(X_i)) \Phi(V_i, f(X_i))  }}_{\dom{H}}. \label{eq:mmd-kernel}
\end{align}
\end{corollary}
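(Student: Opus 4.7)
The plan is to substitute the RKHS form of each $g \in \fset{G}$ into the definition of $D_{{\rm MMD}, f, n}(r)$ from \cref{eq:mmd-emp}, use bilinearity of the inner product to pull $\beta$ outside the double sum, and then reduce the remaining supremum over $\norm{\beta}_{\dom{H}} \le 1$ to a norm by Cauchy--Schwarz.

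Concretely, I would first write $g(V, f(X)) = \abrace{\beta, \Phi(V, f(X))}_{\dom{H}}$ and apply linearity of $\abrace{\cdot,\cdot}_{\dom{H}}$, which commutes with the scalar factor $r(V_i, f(X_i))$ and with the finite sum, to get
\begin{align}
\frac{1}{n(n-1)}\sum_{1 \le i \ne j \le n}\bracket*{ g(V_i, f(X_j)) - r(V_i, f(X_i)) g(V_i, f(X_i)) } = \abrace{\beta,\, u_n(r)}_{\dom{H}},
\end{align}
where
\begin{align}
u_n(r) := \frac{1}{n(n-1)}\sum_{1 \le i \ne j \le n}\bracket*{ \Phi(V_i, f(X_j)) - r(V_i, f(X_i))\Phi(V_i, f(X_i)) }
\end{align}
is exactly the element of $\dom{H}$ whose norm appears on the right-hand side of \cref{eq:mmd-kernel}. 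Taking the supremum over $\beta$ with $\norm{\beta}_{\dom{H}} \le 1$ then yields
\begin{align}
D_{{\rm MMD}, f, n}(r) = \sup_{\norm{\beta}_{\dom{H}} \le 1} \abrace{\beta,\, u_n(r)}_{\dom{H}} = \norm{u_n(r)}_{\dom{H}},
\end{align}
where the last equality is Cauchy--Schwarz, attained by $\beta^\star = u_n(r)/\norm{u_n(r)}_{\dom{H}}$ (and trivially by any unit $\beta$ when $u_n(r) = 0$).

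There is no real obstacle here, as every step is an exact identity from bilinearity and the Riesz/Cauchy--Schwarz duality on the unit ball of $\dom{H}$. The only bookkeeping worth stating explicitly is that the supremum commutes with the double sum because the latter is a linear functional of $g$ once $\beta$ is factored out; no concentration, approximation, or limiting argument enters.
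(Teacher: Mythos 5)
Your proposal is correct and follows essentially the same route as the paper's proof: substitute $g = \abrace{\beta,\Phi(\cdot,\cdot)}$ into \cref{eq:mmd-emp}, use linearity to move the weighted double sum inside the inner product, and identify the supremum over the unit ball with the norm of the resulting element of $\dom{H}$. Your explicit invocation of Cauchy--Schwarz and the remark on the degenerate case $u_n(r)=0$ only make precise the paper's statement that the supremum is attained when $\beta$ points in the direction of the sum.
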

\begin{proof}[Proof of \cref{cor:mmd-kernel}]
  From the definition of the MMD and $\fset{G}$, we have
  \begin{align}
    D_{{\rm MMD}, f, n}(r) =& \sup_{\beta | \norm{\beta}_{\dom{H}} \le 1} \frac{1}{n(n-1)}\sum_{1 \le i \ne j \le n}\bracket*{  \abrace{\beta,\Phi(V_i, f(X_j))}_{\dom{H}} - r(V_i, f(X_i)) \abrace{\beta,\Phi(V_i, f(X_i))}_{\dom{H}}  } \\
    =& \sup_{\beta | \norm{\beta}_{\dom{H}} \le 1} \abrace*{\beta, \frac{1}{n(n-1)}\sum_{1 \le i \ne j \le n}\bracket*{  \Phi(V_i, f(X_j)) - r(V_i, f(X_i)) \Phi(V_i, f(X_i))  }}_{\dom{H}}.
  \end{align}
  Since the supremum is achieved if the direction of $\beta$ is equivalent to that of $\tfrac{1}{n(n-1)}\sum_{1 \le i \ne j \le n}\bracket{  \Phi(V_i, f(X_j)) - r(V_i, f(X_i)) \Phi(V_i, f(X_i))}$, we get the claim.
\end{proof}
For simplicity of notation, we let $\Phi(v_i, f(x_j))$ be represented by $\Phi_{ij}$, $r(v_i, f(x_i))$ by $r_i$, and $k(v_i, f(x_j), v_k, f(x_\ell)) = \abrace{\Phi(v_i, f(x_j)), \Phi(v_k, f(x_\ell))}$ by $k_{ijk\ell}$. Then, \cref{eq:mmd-kernel} can be rearranged as
\begin{align}
  \frac{1}{n^2}\sum_{1\le i,j \le n} r_i r_j k_{iijj} - \frac{2}{n}\sum_{i=1}^n r_i \paren*{\frac{1}{n(n-1)}\sum_{1 \le j \ne k \le n}k_{iijk}} + \frac{1}{n^2(n-1)^2}\sum_{1 \le i \ne j, k \ne \ell \le n} k_{ijk\ell}. \label{eq:mmd-kernel-short}
\end{align}
Let $\mat{Q}$ be a matrix such that $Q_{ij} = k_{iijj}$, and let $\vec{p}$ be a vector such that $p_i = \tfrac{1}{n-1}\sum_{i \le j \ne k \le n}k_{iijk}$. Let $\vec{r}$ be a vector representation of $r_i$. The matrix representation of the minimization of \cref{eq:mmd-kernel-short} is obtained as
\begin{align}
  \min_{\vec{r} \ge 0} ~ \frac{1}{2}\vec{r}^T\mat{Q}\vec{r} - \vec{p}^T\vec{r}.
\end{align}
The minimizer of \cref{eq:mmd-kernel-short} with respect to $r$ can be easily obtained if $\mat{Q}$ is a positive definite matrix.

\section{Generalization Error Bound of \citet{bartlett2005local}}
\label{sec:gen-bartlett}
\citet{bartlett2005local} proved the following theorem for the generalization error bound based on Bousquet's inequality:
\begin{theorem}[\citet{bartlett2005local}]\label{thm:gen-bound}
  Let $\fset{H} = \cbrace{h: \domprod{YX}\to\RealSet | h(Y,X) = \ell(Y,f(X)) - \ell(Y,f^*(X)),  f \in \fset{F}}$, and let $f_n \in \fset{F}$ be a hypothesis learned from the obtained set of samples $\set{S}_n$. Suppose that $h(y,x) - h(y',x') \le c$ for any $h \in \fset{H}$, $y,y' \in \dom{Y}$ and $x, x' \in \dom{X}$. Then, with probability at least $1-e^{-t}$
  \begin{align}
    R(f_n) - R(f^*) \le R_n(f_n) - R_n(f^*) + 4\Rad_n(\fset{H}) + \sigma_{\fset{H}}\sqrt{\frac{2t}{n}} + c\frac{4t}{3n},
  \end{align}
  where $\sigma^2_{\fset{H}} = \sup_{h \in \fset{H}}\Var[h(Y,X)]$.
\end{theorem}

\section{Proofs}
\label{sec:proofs}

\subsection{Proof of \cref{thm:div-bound}}
In order to prove \cref{thm:div-bound}, we prove following lemmas.
\begin{lemma}\label{lem:as-bound-u-stat}
 Suppose that $c_\ell \le r^*(V, f(X)) \le c_u$ almost surely where $c_\ell \in (0, 1]$ and $c_u \in [1, \infty)$, then
 \begin{multline}
  \partial\phi(c_\ell) - \partial\phi(c_u) \le \\ \frac{1}{2}\paren[\Big]{\partial\phi(r^*(V, f(X)))r^*(V, f(X)) + \partial\phi(r^*(V', f(X')))r^*(V', f(X')) \\ - \partial\phi(r^*(V', f(X))) - \partial\phi(r^*(V, f(X')))} \\ \le \partial\phi(c_u)c_u - \partial\phi(c_\ell) ~ a.s. .
 \end{multline}
\end{lemma}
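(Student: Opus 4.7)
The plan is to establish both inequalities pointwise (almost surely) by controlling each of the four terms in the symmetric expression separately, using three ingredients: (a) the monotonicity of $\partial\phi$, which follows from convexity of $\phi$; (b) the normalization $0 \in \partial\phi(1)$ stated earlier in the paper, so that $\partial\phi(r) \le 0$ for $r \le 1$ and $\partial\phi(r) \ge 0$ for $r \ge 1$; and (c) the fact that the hypothesis $c_\ell \le r^*(V, f(X)) \le c_u$ a.s.\ transfers immediately to the cross-evaluations $r^*(V', f(X))$, $r^*(V, f(X'))$, and $r^*(V', f(X'))$, so that all four ratios appearing in the lemma lie in $[c_\ell, c_u]$ almost surely.

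Given this setup, I will prove two auxiliary bounds that hold for every $r \in [c_\ell, c_u]$: first, the straightforward monotonicity bound $\partial\phi(c_\ell) \le \partial\phi(r) \le \partial\phi(c_u)$; second, the product bound $\partial\phi(c_\ell) \le \partial\phi(r)\, r \le \partial\phi(c_u)\, c_u$. The product bound requires a short case split. For the upper inequality, if $r \ge 1$ both factors are non-negative and monotonicity gives $\partial\phi(r)\, r \le \partial\phi(c_u)\, c_u$; if $r \le 1$, then $\partial\phi(r) \le 0 < r$, so $\partial\phi(r)\, r \le 0 \le \partial\phi(c_u)\, c_u$. For the lower inequality, if $r \ge 1$ then $\partial\phi(r)\, r \ge 0 \ge \partial\phi(c_\ell)$; if $r \le 1$ then $\partial\phi(r)\, r - \partial\phi(c_\ell) \ge \partial\phi(c_\ell)\, r - \partial\phi(c_\ell) = \partial\phi(c_\ell)(r-1) \ge 0$, since both factors are non-positive.

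Applying the product bound to the two ``diagonal'' terms $\partial\phi(r^*(V, f(X)))\, r^*(V, f(X))$ and $\partial\phi(r^*(V', f(X')))\, r^*(V', f(X'))$, and the monotonicity bound with a sign flip to the two ``cross'' terms $-\partial\phi(r^*(V', f(X)))$ and $-\partial\phi(r^*(V, f(X')))$, and then averaging with coefficient $1/2$, yields both the upper bound $\partial\phi(c_u)\, c_u - \partial\phi(c_\ell)$ and the lower bound $\partial\phi(c_\ell) - \partial\phi(c_u)$ stated in the lemma.

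The main obstacle is the sign change of $\partial\phi$ at $r = 1$, which makes the product $\partial\phi(r)\, r$ non-monotone in $r$ and forces the case split on whether $r \le 1$ or $r \ge 1$; this is precisely where the paper's WLOG normalization $0 \in \partial\phi(1)$ is essential, since otherwise the sign of $\partial\phi(r)\, r$ on $[c_\ell, c_u]$ could be arbitrary and neither one-sided bound would be sharp. Once the product bound is in hand, the rest is a routine termwise comparison.
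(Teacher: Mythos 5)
Your proposal is correct and follows essentially the same route as the paper's proof: monotonicity of $\partial\phi$ from convexity, the sign normalization $0 \in \partial\phi(1)$ to control the product terms, and termwise combination of the bounds on the diagonal and cross terms. The only difference is that you make explicit the case split on $r \lessgtr 1$ for the product bound $\partial\phi(c_\ell) \le \partial\phi(r)\,r \le \partial\phi(c_u)\,c_u$, which the paper asserts more tersely.
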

\begin{proof}
Since $\partial\phi$ is non-decreasing function due to the convexity of $\phi$, we have
\begin{align}
 \partial\phi(c_\ell) \le \partial\phi(r^*(V', f(X))), \partial\phi(r^*(V, f(X))) \le \partial\phi(c_u) ~ a.s. . \label{eq:bound-subdif-phi}
\end{align}
From the assumption that the subdifferential of $\phi$ contains zero, $\partial\phi(u) \le 0$ for $u \in (0,1]$ and $\partial\phi(u) \ge 0$ for $u \in [1,\infty)$ which results that $\partial\phi(c_u)$ and $\partial\phi(c_\ell)$ are positive and negative, respectively. By this fact and \cref{eq:bound-subdif-phi}, we have
\begin{align}
 \partial\phi(c_\ell) \le \partial\phi(r^*(V, f(X)))r^*(V, f(X)) \le \partial\phi(c_u)c_u ~ a.s. .\label{eq:bound-prod-subphi}
\end{align}
Combining \cref{eq:bound-subdif-phi,eq:bound-prod-subphi} gives the claim.
\end{proof}
\begin{lemma}\label{lem:as-bound-div}
 Suppose that $c_\ell \le r^*(V, f(X)) \le c_u$ almost surely where $c_\ell \in (0, 1]$ and $c_u \in [1, \infty)$, then
\begin{align}
  -\max\cbrace{\phi(c_\ell), \phi(c_u)} \le \Mean[\phi(r^*(V, f(X)))] - \phi(r^*(V, f(X))) \le \max\cbrace{\phi(c_\ell), \phi(c_u)} ~ a.s. .
\end{align}
\end{lemma}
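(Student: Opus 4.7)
The plan is to exploit two simple consequences of the assumed structure of $\phi$: convexity together with $\phi(1)=0$ and $0\in\partial\phi(1)$ (so that $1$ is a global minimizer), which will pin $\phi\circ r^*$ between $0$ and $\max\{\phi(c_\ell),\phi(c_u)\}$ almost surely. Once that double-sided pointwise bound is in hand, the desired bound on the centered quantity $\Mean[\phi(r^*(V,f(X)))]-\phi(r^*(V,f(X)))$ follows immediately by taking expectations and subtracting.

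First I would argue the lower bound. Since $0\in\partial\phi(1)$ by the blanket assumption imposed just after the definition of $D_\phi$, and $\phi$ is convex, the point $u=1$ is a global minimizer of $\phi$ on $\RealSet^+$. Combined with $\phi(1)=0$, this yields $\phi(u)\ge 0$ for every $u\in\RealSet^+$. Applying this at $u=r^*(V,f(X))$ gives $\phi(r^*(V,f(X)))\ge 0$ almost surely and hence $\Mean[\phi(r^*(V,f(X)))]\ge 0$.

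Next I would establish the upper bound. Since $\phi$ is convex on $\RealSet^+$ and $[c_\ell,c_u]\subseteq\RealSet^+$ is a compact interval, the maximum of $\phi$ on $[c_\ell,c_u]$ is attained at one of the endpoints: for every $u\in[c_\ell,c_u]$, writing $u=\lambda c_\ell+(1-\lambda) c_u$ for some $\lambda\in[0,1]$ gives
\begin{equation*}
\phi(u)\le \lambda\phi(c_\ell)+(1-\lambda)\phi(c_u)\le \max\{\phi(c_\ell),\phi(c_u)\}.
\end{equation*}
By the almost sure bound $c_\ell\le r^*(V,f(X))\le c_u$, this yields $\phi(r^*(V,f(X)))\le \max\{\phi(c_\ell),\phi(c_u)\}$ a.s., and taking expectations gives $\Mean[\phi(r^*(V,f(X)))]\le \max\{\phi(c_\ell),\phi(c_u)\}$.

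Finally, putting both sides together, $\phi(r^*(V,f(X)))$ and $\Mean[\phi(r^*(V,f(X)))]$ both lie in $[0,\max\{\phi(c_\ell),\phi(c_u)\}]$ almost surely, so their difference lies in $[-\max\{\phi(c_\ell),\phi(c_u)\},\max\{\phi(c_\ell),\phi(c_u)\}]$, which is precisely the claimed bound. There is no real obstacle here; the only subtlety is remembering to invoke the standing assumption $0\in\partial\phi(1)$ (not just $\phi(1)=0$) to get the nonnegativity of $\phi$, which is what makes the two-sided bound symmetric around zero.
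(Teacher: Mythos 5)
Your proof is correct and follows essentially the same route as the paper's: both arguments reduce the claim to the pointwise bound $0 \le \phi(r^*(V,f(X))) \le \max\{\phi(c_\ell),\phi(c_u)\}$ almost surely, obtained from convexity together with the standing normalization $\phi(1)=0$, $0\in\partial\phi(1)$ (the paper phrases this via monotonicity of $\phi$ on either side of $1$, you via the global-minimizer property and the endpoint maximum of a convex function on $[c_\ell,c_u]$, which are interchangeable). The final step of subtracting the two quantities lying in $[0,\max\{\phi(c_\ell),\phi(c_u)\}]$ is the same in both.
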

\begin{proof}
From the assumption that the subdifferential of $\phi$ contains zero, $\partial\phi(u) \le 0$ for $u \in (0,1]$ and $\partial\phi(u) \ge 0$ for $u \in [1,\infty)$ which yields that $\phi(u)$ is non-increasing in $(0,1]$ and non-decreasing in $[1,\infty)$. Therefore, $0 \le \phi(r^*(V, f(X))) \le \max\cbrace{\phi(c_u),\phi(c_\ell)} ~ a.s.$, which gives the claim.
\end{proof}
\begin{proof}[Proof of \cref{thm:div-bound}]
The error $D_\phi(f) - D_{\phi,n}(f)$ is decomposed as
\begin{align}
  D_\phi(f) - D_{\phi,n}(f) = D_\phi^{r^*}(f) - D_{\phi,n}^{r^*}(f) + D_{\phi,n}^{r^*}(f) - D_{\phi,n}^{r_n}(f). \label{eq:error-decomp}
\end{align}
From the definition of the subdifferential, we have
\begin{align}
  & D_{\phi,n}^{r^*}(f) - D_{\phi,n}^{r_n}(f) \\
   =& \frac{1}{n}\sum_{i = 1}^n \bracket{\phi(r^*(V_i, f(X_i))) - \phi(r_n(V_i, f(X_i)))} \\
   \le& \frac{1}{n}\sum_{i = 1}^n \partial\phi(r^*(V_i, f(X_i)))\paren*{ r^*(V_i, f(X_i)) - r_n(V_i, f(X_i))} \\
   =& \frac{1}{n}\sum_{i = 1}^n \partial\phi(r^*(V_i, f(X_i)))r^*(V_i, f(X_i)) - \frac{1}{n}\sum_{i = 1}^n \partial\phi(r^*(V_i, f(X_i))) r_n(V_i, f(X_i)). \label{eq:emp-error}
\end{align}
Since the left term in \cref{eq:emp-error} is regarded as the empirical mean of $\partial\phi(r^*(V_i, f(X_i)))$ with respect to $\pjoin{V, f(X)}$ weighted by $r^*(V, f(X)) = d\pidep{V,f(X)}/d\pjoin{V,f(X)}$, it is well approximated by the U-statistics of the expectation of $\partial\phi(r^*(V_i, f(X_i)))$ with respect to $\pidep{V, f(X)}$. We thus decompose the right hand side in \cref{eq:emp-error} using the U-statistics of $\partial\phi(r^*(V_i, f(X_i)))$ as
\begin{align}
  & D_{\phi,n}^{r^*}(f) - D_{\phi,n}^{r_n}(f) \\
   \le& \frac{1}{n}\sum_{i = 1}^n \partial\phi(r^*(V_i, f(X_i)))r^*(V_i, f(X_i)) - \frac{1}{n(n-1)}\sum_{1 \le i \ne j \le n}\partial\phi(r^*(V_i, f(X_j))) \\ & + \frac{1}{n(n-1)}\sum_{1 \le i \ne j \le n}\bracket*{ \partial\phi(r^*(V_i, f(X_j))) - r_n(V_i, f(X_i))\partial\phi(r^*(V_i, f(X_i))) }. \label{eq:emp-error-u}
\end{align}
Since $\fset{G}$ contains $\partial\phi(r^*(V, f(X)))/a$, the last term in \cref{eq:emp-error-u} is bounded above by the MMD as
\begin{align}
  &\frac{1}{n(n-1)}\sum_{1 \le i \ne j \le n}\bracket*{ \partial\phi(r^*(V_i, f(X_j))) - r_n(V_i, f(X_i))\partial\phi(r^*(V_i, f(X_i))) } \\
   \le& a\sup_{g \in \fset{G}} \frac{1}{n(n-1)}\sum_{1 \le i \ne j \le n}\bracket*{ g(V_i, f(X_j)) - r_n(V_i, f(X_i))g(V_i, f(X_i)) } \\
   =& aD_{{\rm MMD}, f, n}(r_n).
\end{align}
Letting the first two terms in \cref{eq:emp-error-u}  be
\begin{align}
  U_{\phi,n}(f) =& \frac{1}{n}\sum_{i = 1}^n \partial\phi(r^*(V_i, f(X_i)))r^*(V_i, f(X_i)) - \frac{1}{n(n-1)}\sum_{1 \le i \ne j \le n}\partial\phi(r^*(V_i, f(X_j))) ,
\end{align}
the error is bounded above as
\begin{align}
  D_\phi(f) - D_{\phi,n}(f) \le D_\phi^{r^*}(f) - D_{\phi,n}^{r^*}(f) + U_{\phi,n}(f) + aD_{{\rm MMD}, f, n}(r_n).
\end{align}
Next, we derive the probabilistic bound on the $D_\phi^{r^*}(f) - D_{\phi,n}^{r^*}(f) + U_{\phi,n}(f)$. The expectations of $U_{\phi,n}(f)$ is equivalent to zero
\begin{align}
  \Mean[U_{\phi,n}(f)] =& \Mean\bracket*{ \frac{1}{n}\sum_{i = 1}^n \partial\phi(r^*(V_i, f(X_i)))r^*(V_i, f(X_i)) - \frac{1}{n(n-1)}\sum_{1 \le i \ne j \le n}\partial\phi(r^*(V_i, f(X_j))) } \\
   =& \Mean\bracket{\partial\phi(r^*(V, f(X)))r^*(V, f(X))} - \Mean\bracket{\partial\phi(r^*(V', f(X)))} \\
   =& \Mean\bracket{\partial\phi(r^*(V', f(X)))} - \Mean\bracket{\partial\phi(r^*(V', f(X)))} \\
   =& 0. \label{eq:exp-u}
\end{align}
As proved the almost surely bound in \cref{lem:as-bound-u-stat,lem:as-bound-div}, application of the exponential inequality for the U-statistics~\cite{hoeffding:1963} gives with probability at least $1-e^{-t}$
\begin{align}
  D_\phi^{r^*}(f) - D_{\phi,n}^{r^*}(f) + U_{\phi,n}(f) \le
   \Mean\bracket{ D_\phi^{r^*}(f) - D_{\phi,n}^{r^*}(f) + U_{\phi,n}(f) } + c\sqrt{\frac{2t}{n}},
\end{align}
where $c$ is a constant defined as in the claim.
As shown in \cref{eq:exp-u}, $\Mean[U_{\phi,n}(f) ] = 0$. In addition, since $D_{\phi,n}^{r^*}(f)$ is the unbiased estimator of $D_\phi^{r^*}(f)$, $\Mean[D_\phi^{r^*}(f)] = D_\phi^{r^*}(f)$.
\end{proof}

\subsection{Proof of \cref{thm:opt-equiv} and \cref{lem:convex}}
\begin{proof}[Proof of \cref{thm:opt-equiv}]
 For any $i$, if $\max_{k \ne j} \theta(x_i, k) - \theta(x_i, j) \le 0$ holds for any $j \in \iset{I}_i \subseteq \cbrace{1,...,c}$ such that $\abs{\iset{I}_i} > 1$, $\max_{k \ne j} \theta(x_i, k) - \theta(x_i, j)$ is positive for all $j$ because of the definition of $\max$. Thus, this case violates the first constraint in \cref{eq:opt-sol} since $\sum_{j=1}^c \gamma_{ij} = 1$ and $\gamma_{ij} \ge 0$ require $\gamma_{ij} > 0$ for some $j$.
 Therefore, for any $i$ and $j$, if $\max_{k \ne j} \theta(x_i, k) - \theta(x_i, j) \le 0$, then $\max_{k \ne j} \theta(x_i, k) - \theta(x_i, p) > 0$ for any $p \ne j$ because of the definition of $\max$. This indicates that if $\vec\gamma$ are feasible, then only one element of $\cbrace{\gamma_{ij}}_{j = 1}^c$ is one and the others are zero for each $i$. Since if $\max_{k \ne j} \theta(x_i, k) - \theta(x_i, j) \le 0$ holds then $f(x_i) = j$, $I[f(x_i) = j] = \gamma_{ij}$. Thus, since the solution of \cref{eq:opt-sol} holds the constraints in \cref{eq:opt-eq}, we get the claim.
\end{proof}

\begin{proof}[Proof of \cref{lem:convex}]
 Since $E_{\phi,n}(\vec\gamma, g)$ is linear with respect to $\vec\gamma$, $\max_{g \in \fset{G}} E_{\phi,n}(\vec\gamma, g)$ is a convex function with respect to $\vec\gamma$. Since $R_n(f)$ is convex with respect to $\theta$ because of the convexity of $\ell$, the objective function in \cref{eq:opt-sol} is convex with respect to $f$ and $\vec\gamma$. In addition, $\max_{k \ne j} \theta(x_i, k) - \theta(x_i, j) + \gamma_{ij}$ is convex with respect to $\theta$ and $\gamma_{ij}$. Thus, all constraints are convex inequalities or linear equations.
\end{proof}

\subsection{Proof of \cref{thm:rest-error-bound}}
For the proof of \cref{thm:rest-error-bound}, we prove the following theorem.
\begin{theorem}\label{thm:conv-bound}
 Let $f^* \in \fset{F}$ be a hypothesis such that $R(f^*) = \inf_{f \in \fset{F}}R(f)$. Suppose that $\ell(y,f(x)) - \ell(y',f(x')) \le c$ for any $f \in \fset{F}$ and $(y,x),(y',x') \in \dom{Z}$. Then, with probability at least $1 - e^{-t}$
 \begin{align}
  \abs*{R(f^*) - \inf_{f \in \fset{F}}R_n(f)} \le 4\Rad^{\rm abs}_n(\ell\circ\fset{F}) + \sigma_{\ell\circ\fset{F}}\sqrt{\frac{2t}{n}} + c\frac{4t}{3n}.
 \end{align}
\end{theorem}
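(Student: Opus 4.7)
The plan is to reduce the bound on $\abs{R(f^*) - \inf_{f \in \fset{F}} R_n(f)}$ to a two-sided uniform-deviation bound on $\sup_{f \in \fset{F}} \abs{R(f) - R_n(f)}$, and then control that supremum by symmetrization plus a Bousquet-type concentration inequality, mirroring the argument behind \cref{thm:gen-bound}. For the reduction, let $\hat f_n$ be an (approximate) minimizer of $R_n$ over $\fset{F}$. The optimality of $f^*$ for $R$ gives
\begin{align}
R(f^*) - R_n(\hat f_n) \le R(\hat f_n) - R_n(\hat f_n) \le \sup_{f \in \fset{F}}\abs{R(f) - R_n(f)},
\end{align}
while the optimality of $\hat f_n$ for $R_n$ gives
\begin{align}
R(f^*) - R_n(\hat f_n) \ge R(f^*) - R_n(f^*) \ge -\sup_{f \in \fset{F}}\abs{R(f) - R_n(f)}.
\end{align}
Thus $\abs{R(f^*) - \inf_{f \in \fset{F}} R_n(f)} \le Z$, where $Z := \sup_{f \in \fset{F}}\abs{R(f) - R_n(f)}$.

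Next, a standard ghost-sample symmetrization applied to the absolute-value supremum yields
\begin{align}
\Mean[Z] \le 2\Rad^{\rm abs}_n(\ell\circ\fset{F}),
\end{align}
the absolute value inside the sup being precisely what forces $\Rad^{\rm abs}_n$ rather than $\Rad_n$. Under the boundedness hypothesis $\ell(y,f(x)) - \ell(y',f(x')) \le c$, replacing a single sample changes $Z$ by at most $c/n$, so Bousquet's form of Talagrand's inequality gives with probability at least $1 - e^{-t}$
\begin{align}
Z \le \Mean[Z] + \sqrt{\frac{2t}{n}\paren*{\sigma^2_{\ell\circ\fset{F}} + 2c\Mean[Z]}} + \frac{ct}{3n}.
\end{align}

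The remaining step is algebraic peeling: splitting $\sqrt{a+b} \le \sqrt{a} + \sqrt{b}$ and applying $2\sqrt{xy} \le x+y$ with $x = ct/n$ and $y = \Mean[Z]$, I get $\sqrt{(2t/n)\cdot 2c\Mean[Z]} = 2\sqrt{(ct/n)\Mean[Z]} \le \Mean[Z] + ct/n$. Combined with the symmetrization bound this yields
\begin{align}
Z \le 2\Mean[Z] + \sigma_{\ell\circ\fset{F}}\sqrt{\frac{2t}{n}} + \frac{ct}{n} + \frac{ct}{3n} \le 4\Rad^{\rm abs}_n(\ell\circ\fset{F}) + \sigma_{\ell\circ\fset{F}}\sqrt{\frac{2t}{n}} + \frac{4ct}{3n},
\end{align}
which is the claimed bound.

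The only delicate point is matching the exact constants in the last step: the Bousquet residue $ct/(3n)$ together with the $ct/n$ produced by absorbing $\Mean[Z]$ inside the square root sums to precisely $4ct/(3n)$, and the doubled expectation $2\Mean[Z]$ becomes $4\Rad^{\rm abs}_n$ after symmetrization. No idea beyond the proof of \cref{thm:gen-bound} is required; the sole adaptation is the use of the two-sided (absolute-value) supremum, which is essential because the reduction step needs two-sided control. Note that \cref{thm:rest-error-bound} will then follow by applying this bound separately to $\fset{F}$ and $\fset{F}_\tau$ and taking a union bound with the event $\cbrace*{f_n \in \fset{F}_\tau}$ guaranteed by \cref{thm:div-bound}.
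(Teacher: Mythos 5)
Your proposal is correct and follows essentially the same route as the paper: reduce $\abs{R(f^*) - \inf_{f}R_n(f)}$ to the two-sided uniform deviation $\sup_{f\in\fset{F}}\abs{R(f)-R_n(f)}$ (the paper does this by bounding $\sup_f(R(f^*)-R_n(f))$ from both sides rather than via an empirical minimizer, but it is the same reduction), then apply Bousquet's inequality, peel the variance term with $\sqrt{u+v}\le\sqrt{u}+\sqrt{v}$ and $2\sqrt{uv}\le u+v$, and finish by symmetrization to $\Rad^{\rm abs}_n(\ell\circ\fset{F})$. The constants work out identically, so no further changes are needed.
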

\begin{proof}
 Since $\sup_{f \in \fset{F}}(R(f^*) - R(f)) = \inf_{f \in \fset{F}}R(f) - \inf_{f \in \fset{F}}R(f) = 0$, we have
 \begin{align}
   R(f^*) - \inf_{f \in \fset{F}}R_n(f) = \sup_{f \in \fset{F}}\paren*{R(f^*) - R_n(f)} \le& \sup_{f \in \fset{F}}(R(f^*) - R(f)) + \sup_{f \in \fset{F}}(R(f) - R_n(f)) \\
  =& \sup_{f \in \fset{F}}(R(f) - R_n(f)).
 \end{align}
 From the definition of $\sup$, we have
 \begin{align}
  \sup_{f \in \fset{F}}\paren*{R(f^*) - R_n(f)} \ge R(f^*) - R_n(f^*).
 \end{align}
 Hence, we have
 \begin{multline}
  \abs*{\sup_{f \in \fset{F}}\paren*{R(f^*) - R_n(f)}} \le\\ \max\cbrace*{\abs*{\sup_{f \in \fset{F}}(R(f) - R_n(f))},\abs*{R(f^*) - R_n(f^*)}} \le \sup_{f \in \fset{F}}\abs*{R(f) - R_n(f)}.
 \end{multline}
 The bound on $\sup_{f \in \fset{F}}\abs{R(f) - R_n(f)}$ is derived in the same manner of the proof of \cref{thm:gen-bound}. Application of the Bousquet's inequality gives with probability at least $1-e^{-t}$
 \begin{align}
  \sup_{f \in \fset{F}}\abs{R(f) - R_n(f)} \le \Mean\sup_{f \in \fset{F}}\abs{R(f) - R_n(f)} + \sqrt{\frac{2v}{n}} + c\frac{t}{3n},
 \end{align}
 where $v = 2\Mean\sup_{f \in \fset{F}}\abs{R(f) - R_n(f)} + \sigma^2_{\ell\circ\fset{F}}$. By the fact that $\sqrt{u + v} \le \sqrt{u} + \sqrt{v}$ and $2\sqrt{uv} \le \alpha u + v/\alpha$ for $u,v,\alpha \ge 0$, application of the symmetrization technique yields the claim.
\end{proof}
\begin{corollary}\label{cor:conv-tau-bound}
  Let $f^*_\tau \in \fset{F}_\tau$ be a hypothesis such that $R(f^*) = \inf_{f \in \fset{F}_\tau}R(f)$. Suppose that $\ell(y,f(x)) - \ell(y',f(x')) \le c$ for any $f \in \fset{F}$ and $(y,x),(y',x') \in \dom{Z}$. Then, with probability at least $1 - e^{-t} - e^{-\tau}$
  \begin{align}
   \abs*{R(f^*_\tau) - R_n(f_n)} \le 4\Rad^{\rm abs}_n(\ell\circ\fset{F}_\tau) + \sigma_{\ell\circ\fset{F}_\tau}\sqrt{\frac{2t}{n}} + c\frac{4t}{3n}.
  \end{align}
\end{corollary}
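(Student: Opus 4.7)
The target quantity is the difference of two differences, so my first step is the triangle inequality:
\begin{align*}
&\abs*{R(f^*_\tau) - R(f^*) - \paren*{R_n(f_n) - \inf_{f \in \fset{F}}R_n(f)}} \\
&\qquad\le \abs*{R(f^*_\tau) - R_n(f_n)} + \abs*{R(f^*) - \inf_{f \in \fset{F}}R_n(f)}.
\end{align*}
Each summand is then handled by a uniform convergence argument on a different function class ($\fset{F}$ and $\fset{F}_\tau$, respectively), and the whole thing is closed by a union bound.

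For the second summand, I use that $R(f^*)=\inf_{f\in\fset{F}}R(f)$ and rewrite it as $\sup_{f\in\fset{F}}(R(f^*)-R_n(f))$; comparing this with the single term $R(f^*)-R_n(f^*)$ shows it is dominated by $\sup_{f\in\fset{F}}\abs{R(f)-R_n(f)}$. I would then apply Bousquet's concentration inequality (whose bounded-differences constant is $c/n$ by the loss-oscillation assumption) and replace the expected supremum by $2\Rad_n^{\rm abs}(\ell\circ\fset{F})$ via the standard symmetrization trick, yielding a contribution of $4\Rad^{\rm abs}_n(\ell\circ\fset{F}) + \sigma_{\ell\circ\fset{F}}\sqrt{2t/n} + 4ct/(3n)$ with probability at least $1-e^{-t}$.

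The first summand is the delicate one. The learned $f_n$ is the minimizer of $R_n$ subject to the \emph{empirical} fairness constraint $D_{\phi,n}(f) + a_n D_{{\rm MMD},f,n}(r_n)\le\eta$, not over $\fset{F}_\tau$ itself. However, \cref{thm:div-bound} guarantees with probability at least $1-e^{-\tau}$ that $D_\phi(f_n)\le\eta+c\sqrt{2\tau/n}$, i.e.\ $f_n\in\fset{F}_\tau$. On that event the same $\sup$/$\inf$ manipulation as above, applied to $\fset{F}_\tau$ in place of $\fset{F}$ and using $R(f^*_\tau)=\inf_{f\in\fset{F}_\tau}R(f)$, gives $\abs{R(f^*_\tau)-R_n(f_n)}\le\sup_{f\in\fset{F}_\tau}\abs{R(f)-R_n(f)}$, which I bound by a second Bousquet plus symmetrization pass to obtain the $4\Rad^{\rm abs}_n(\ell\circ\fset{F}_\tau) + \sigma_{\ell\circ\fset{F}_\tau}\sqrt{2t/n}+4ct/(3n)$ contribution with probability at least $1-e^{-t}$.

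A union bound over the two Bousquet events and the $\fset{F}_\tau$-membership event produces total failure probability at most $2e^{-t}+e^{-\tau}$, and summing the two contributions yields the claimed inequality. The main obstacle is the first summand: a pure excess-risk treatment would only give the one-sided bound $R(f_n)-R(f^*_\tau)\ge 0$, whereas here both signs must be controlled, so one is forced to pass to the two-sided uniform deviation $\sup_{f\in\fset{F}_\tau}\abs{R(f)-R_n(f)}$ and to invoke the high-probability membership $f_n\in\fset{F}_\tau$ from \cref{thm:div-bound}; once this reduction is in place, the rest is standard Bousquet/symmetrization machinery.
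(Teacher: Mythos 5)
Your argument is correct and, for the statement at hand, follows the paper's own route: you invoke \cref{thm:div-bound} to place $f_n$ in $\fset{F}_\tau$ with probability at least $1-e^{-\tau}$, reduce $\abs{R(f^*_\tau)-R_n(f_n)}$ on that event to the uniform deviation $\sup_{f\in\fset{F}_\tau}\abs{R(f)-R_n(f)}$, and close with the same Bousquet-plus-symmetrization machinery as in \cref{thm:conv-bound}, the union bound giving exactly the stated $1-e^{-t}-e^{-\tau}$. The only remark is that you framed this as the ``first summand'' of a proof of \cref{thm:rest-error-bound}, which is more than was asked, but the embedded treatment of $\abs{R(f^*_\tau)-R_n(f_n)}$ coincides with the paper's proof of the corollary.
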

\begin{proof}
  The proof follows the same manner of the proof of \cref{thm:conv-bound} expect the upper bound of $R(f^*) - R_n(f_n)$. From \cref{thm:div-bound}, we have with probability at least $1-e^{-\tau}$
  \begin{align}
    R(f^*) - R_n(f_n) \le \sup_{f \in \fset{F}_\tau}(R(f^*) - R_n(f)) \le \sup_{f \in \fset{F}_\tau}(R(f) - R_n(f)).
  \end{align}
\end{proof}
\begin{proof}[Proof of \cref{thm:rest-error-bound}]
  The error is bounded above as
\begin{align}
     \abs*{R(f^*_\tau) - R(f^*) - \paren*{R_n(f_n) - \inf_{f \in \fset{F}}R_n(f)} }
     \le \abs*{R(f^*_\tau) - R_n(f_n)} + \abs*{R(f^*) - \inf_{f \in \fset{F}}R_n(f)}. \label{eq:decomp-rest-err}
\end{align}
Combining \cref{thm:conv-bound,cor:conv-tau-bound} gives the claim.
\end{proof}

\end{document}